\documentclass{article}

 \usepackage[main, final]{neurips_2026}

\usepackage[utf8]{inputenc} 
\usepackage[T1]{fontenc}    
\usepackage{hyperref}       
\usepackage{url}            
\usepackage{booktabs}       
\usepackage{amsfonts}       
\usepackage{nicefrac}       
\usepackage{microtype}      
\usepackage{xcolor}         
\usepackage{amsmath}         %
\usepackage{amssymb}
\usepackage{wrapfig}
\usepackage{graphicx}
\usepackage{colortbl} 
\usepackage{subcaption}
\usepackage{caption}
\usepackage{threeparttable}
\usepackage{amsthm}
\usepackage{multirow}

\newtheorem{assumption}{Assumption}
\newtheorem{proposition}{Proposition}
\newtheorem{remark}{Remark}

\newcommand{\pmval}[1]{\scriptsize{$\pm$#1}}

\newcommand{\synx}{\hat{\boldsymbol{x}}}

\title{Spectral Gradient Surgery for\\ Domain-Generalizable Dataset Distillation}

\author{%
  Minyoung Oh, Najeong Chae, and Jae-Young Sim\thanks{corresponding author.} \\
  Graduate School of Artificial Intelligence\\
  Ulsan National Institute of Science and Technology~(UNIST)\\
  Ulsan, Republic of Korea \\
  \texttt{\{mmyy2513, njchae, jysim\}@unist.ac.kr} \\
}

\begin{document}

\maketitle

\begin{abstract}
Dataset Distillation (DD) synthesizes a compact synthetic dataset that preserves the training utility of a full dataset. However, its standard formulation assumes that test data follow the same distribution as training data, an assumption that rarely holds in practice. A straightforward extension—applying post-hoc Domain Generalization (DG) techniques to distilled data—is ill-suited because existing DG methods rely on the natural diversity of real datasets, which compact synthetic sets inherently lack, while also incurring substantial augmentation overhead that conflicts with the efficiency objective of DD.
To address this limitation, we introduce Domain Generalizable Dataset Distillation (DGDD), a new problem setting that explicitly targets out-of-distribution (OOD) generalization of distilled datasets. We study this problem through a widely adopted DD baseline of Distribution Matching (DM). We attribute the OOD vulnerability of DM to the entanglement of class-discriminative and domain-specific information within the compressed synthetic set, and propose Spectral Gradient Surgery (SGS) to disentangle the two.
The key insight of SGS is that cross-domain agreement among domain-wise gradients in the spectral domain reveals which gradient components are shared across source domains—and are therefore class-discriminative—and which are domain-specific. Based on this observation, SGS augments the standard DM update with two complementary gradients: one that reinforces cross-domain shared components and another that explicitly promotes diversity within the distilled dataset. Extensive experiments on diverse-scale benchmarks demonstrate that SGS substantially improves OOD generalization while remaining plug-and-play compatible with existing DM methods.
\end{abstract}

\section{Introduction}

In recent years, deep neural networks have achieved remarkable progress across diverse fields, driven by the availability of large-scale training datasets. 
However, the increasing scale of training data introduces substantial storage and computational costs, imposing significant bottlenecks on practitioners. 
To address this, Dataset Distillation (DD)~\citep{dd} has attracted significant attention as a promising solution.
DD synthesizes a compact dataset such that a model trained on it achieves performance comparable to one trained on the full dataset. This enables a range of applications, including data storage reduction and efficient training under resource constraints, thereby accelerating downstream model development.

DD methods typically update the synthetic dataset by matching the training dynamics between the synthetic and real datasets so that the synthetic dataset can serve as a faithful proxy for the full dataset. Specifically, DD methods are categorized according to the type of training dynamics they match: (1) Distribution Matching (DM)~\citep{dm}, which aligns feature distributions; (2) Gradient Matching (GM)~\citep{gm}, which aligns per-step gradients; and (3) Trajectory Matching (TM)~\citep{mtt}, which aligns parameter trajectories over training. 
However, existing DD research predominantly assumes that the training and test data are independent and identically distributed (i.i.d.), which rarely holds in practice.
Under domain shift, models trained on distilled data often exhibit significant performance degradation, limiting the practical applicability of DD in real-world scenarios.
We therefore argue that improving the robustness of distilled datasets under domain shift is an important research direction. 
To this end, we investigate this problem through the lens of DM, which has been widely adopted as a baseline due to its strong performance and computational efficiency.

\begin{wrapfigure}{r}{0.45\linewidth}
  \begin{center}
    \vspace{-15pt} 
    \includegraphics[width=\linewidth]{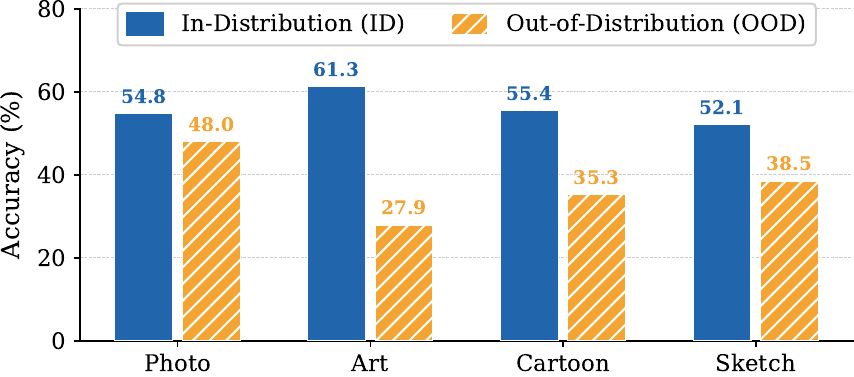}
    \caption{Comparison of ID and OOD performance for each target domain on PACS.}
    \label{fig:ood_drop}
    \vspace{3mm}
    \includegraphics[width=\linewidth]{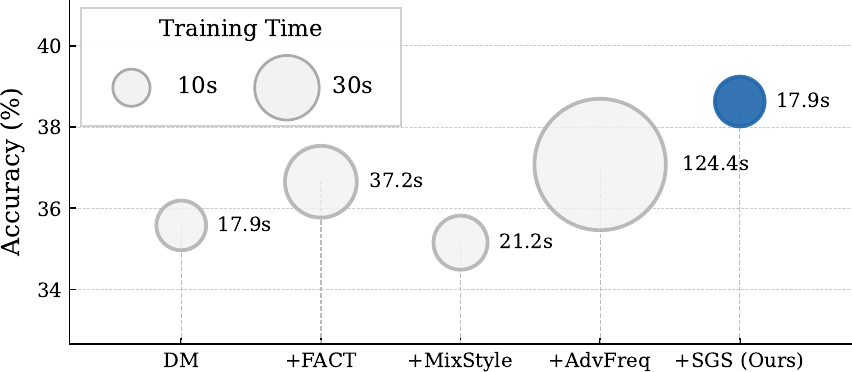}
    \caption{Comparison of average OOD performance and downstream training time on PACS when applying DG methods post-hoc to DM-distilled datasets.}
    \label{fig:posthoc}
    \vspace{-10pt}
  \end{center}
\end{wrapfigure}

We assess the robustness of DM-distilled dataset with 10 images-per-class against domain shifts on the PACS~\citep{pacs} dataset under two evaluation settings: \textbf{in-distribution (ID)} (evaluated on source domains) and \textbf{out-of-distribution (OOD)} (evaluated on an unseen target domain via leave-one-domain-out). 
As shown in Figure~\ref{fig:ood_drop}, existing DM methods are highly vulnerable to domain shifts and suffer from severe OOD performance degradation. Although applying Domain Generalization (DG) methods~\citep{fact, mixstyle} during downstream training is a straightforward remedy, Figure~\ref{fig:posthoc} shows that such post-hoc approaches provide only marginal gains and can even harm performance. This is because distilled datasets visually differ from real images, violating the assumptions underlying existing DG methods. Moreover, their heavy augmentation strategies significantly increase training time, contradicting DD's core objective of rapid training. These findings indicate that naively applying existing DG techniques is insufficient. Motivated by this, we propose a new problem setting termed \textbf{Domain Generalizable Dataset Distillation~(DGDD)}.

We attribute the OOD performance degradation of existing DM methods to the unintended distillation of domain-specific information alongside class-discriminative information, as these methods compress the entire training dataset into a compact synthetic one without distinguishing between the two. 
To mitigate this, we propose \textbf{Spectral Gradient Surgery (SGS)}, which augments the standard DM update with two additional gradient signals derived from per-domain decomposition in the spectral domain.
Specifically, SGS computes domain-wise DM gradients, transforms them into the spectral domain, and measures cross-domain agreement at each frequency component via the resultant length. 
Components with high agreement are aggregated into a class-discriminative signal that reinforces patterns shared across source domains, while per-domain deviations from this consensus form a domain-specific signal that preserves distinct domain characteristics, encouraging diversity within the distilled dataset.
By design, SGS is plug-and-play compatible with existing DM methods, requiring no modification to their core objectives.
Extensive experiments demonstrate that SGS substantially improves the OOD generalization of distilled datasets across diverse benchmarks.

Our contributions can be summarized as follows:
\begin{itemize}
    \item To the best of our knowledge, we first introduce a novel problem, Domain Generalizable Dataset Distillation~(DGDD). Unlike conventional DD, which primarily focuses on preserving ID performance, DGDD explicitly targets robustness to domain shift and strong OOD generalization.
    \item We propose Spectral Gradient Surgery~(SGS), a plug-and-play extension to existing DM methods that augments standard DM updates with two spectral-domain gradient signals: a class-discriminative signal that captures domain-invariant patterns shared across source domains, and a domain-specific signal that preserves distinctive domain characteristics to encourage diversity among synthetic samples.
\end{itemize}

\section{Related Work}
\subsection{Dataset Distillation}
Existing dataset distillation methods are typically categorized into three groups: Gradient Matching~(GM)~\citep{gm, dsa}, which aligns per-step gradients between real and synthetic data; Trajectory Matching~(TM)~\citep{mtt, tesla}, which aligns full parameter trajectories over training; and Distribution Matching~(DM)~\citep{dm}, which is the primary focus of this paper. 
While GM and TM achieve strong performance, their reliance on costly bi-level optimization limits scalability to large datasets. 
DM methods address this by directly aligning feature distributions between real and synthetic data using randomly initialized encoders, offering a significantly more efficient alternative.
Subsequent works have improved DM along several directions. 
\citet{datadam} proposes spatial attention matching to better capture discriminative features across multiple network layers. 
\citet{m3d} extends distribution matching beyond first-order moments by minimizing Maximum Mean Discrepancy in a reproducing kernel Hilbert space, enabling higher-order moment alignment. 
\citet{ncfm} reformulates distribution matching as a minmax optimization over neural characteristic functions, aligning both their phase and amplitude in the complex plane. 
\citet{hdd} introduces hyperbolic geometry into dataset distillation, embedding feature distributions into the Lorentz model to naturally capture hierarchical sample structure.
However, all of these methods aim exclusively to match the feature distribution of the source dataset, optimizing solely for in-distribution performance. 
While a recent work~\citep{asu} introduces a framework that distills from streaming multi-domain datasets, it still restricts its scope to seen domains and does not consider generalization to unseen target domains. 
Consequently, the robustness of distilled datasets against domain shift—a frequent challenge in real-world deployment—remains largely unexplored. 
In this paper, we address this critical gap by explicitly targeting out-of-distribution generalization.

\subsection{Domain Generalization}

Domain generalization (DG) aims to mitigate performance degradation under distribution shifts, a critical challenge in deploying AI models. 
Existing DG methods can be broadly classified into two main approaches. 
The first approach enhances training diversity through data augmentation. 
For instance, \citet{fact} mixes the amplitude spectrum of source images while preserving the phase. 
\citet{mixstyle} probabilistically mixes channel-wise feature statistics across instances to synthesize novel styles. 
\citet{advst} adversarially optimizes parameterized augmentations to generate challenging samples, while \citet{advfreq} perturbs the amplitude spectrum to reduce reliance on shortcut frequency bands.
The second approach enforces invariance via cross-domain gradient alignment during training. 
\citet{irm} enforces a shared optimal classifier across domains, 
\citet{fish} aligns per-domain gradient means, and \citet{fishr} matches their variances. 
These approaches operate on parameter-space gradients, whereas our proposed SGS applies cross-domain agreement to the image-space gradients of synthetic data. 
This critical distinction enables a natural spectral decomposition on the 2D Fourier basis, which is neither available nor meaningful in parameter space.
While applying existing DG methods post-hoc to distilled datasets might seem straightforward, it is fundamentally insufficient for two reasons. 
First, these methods assume access to diverse data drawn from a natural image distribution—an assumption violated by compact synthetic datasets. 
Second, their reliance on heavy augmentation pipelines introduces substantial computational overhead, undermining the core advantage of dataset distillation: efficient downstream training. 
Therefore, a dedicated approach that integrates domain generalization directly into the distillation process is strictly required.

\section{Methodology}\label{sec:method}
\subsection{Problem Formulation}
Let $\mathcal{D} = \bigcup_{s=1}^{S} \mathcal{D}^s$ be the union of $S$ source datasets representing distinct domains over a shared class label space $\mathcal{Y}$, with $\mathbb{P}^{\mathcal{D}}$ denoting its mixture distribution. We define $\mathcal{T}$ as the test dataset drawn from a target distribution $\mathbb{P}^{\mathcal{T}}$.
Given $\mathcal{D}$, dataset distillation synthesizes a compact synthetic dataset $\hat{\mathcal{D}} = \{(\hat{\boldsymbol{x}}_j, \hat{y}_j)\}$ that serves as a faithful proxy for $\mathcal{D}$.
The synthetic dataset $\hat{\mathcal{D}}$ is typically much smaller than the full dataset $\mathcal{D}$ ($|\hat{\mathcal{D}}| \ll |\mathcal{D}|$), where $|\hat{\mathcal{D}}| = \text{IPC} \times C$, with IPC denoting the number of images per class and $C$ the number of classes in $\mathcal{Y}$.
The goal of conventional DD is to find $\hat{\mathcal{D}}$ that minimizes the performance gap between networks trained on $\hat{\mathcal{D}}$ and on $\mathcal{D}$, measured by a loss function $\mathcal{L}$ on $\mathcal{T}$ based on the i.i.d.\ assumption:
\begin{equation}
    \hat{\mathcal{D}}^* = \arg\min_{\hat{\mathcal{D}}} \mathbb{E}_{(\boldsymbol{x}, y) \sim \mathcal{T}} \left[\mathcal{L}(\phi_{\hat{\mathcal{D}}}(\boldsymbol{x}, y)) - \mathcal{L}(\phi_{\mathcal{D}}(\boldsymbol{x}, y))\right],\quad \text{s.t.}\quad\mathbb{P}^{\mathcal{T}} = \mathbb{P}^{\mathcal{D}},
    \label{eq:dd}
\end{equation}
where $\phi_{\hat{\mathcal{D}}}$ and $\phi_{\mathcal{D}}$ denote networks trained on $\hat{\mathcal{D}}$ and $\mathcal{D}$, respectively.

However, the strict i.i.d.\ assumption rarely holds in the presence of domain shift. To address this limitation, we introduce a more general problem setting of \textbf{Domain Generalizable Dataset Distillation (DGDD)}:
\begin{equation}
	\hat{\mathcal{D}}^* = \arg\min_{\hat{\mathcal{D}}} \mathbb{E}_{(\boldsymbol{x}, y) \sim \mathcal{T}} \left[\mathcal{L}(\phi_{\hat{\mathcal{D}}}(\boldsymbol{x}, y)) - \mathcal{L}(\phi_{\mathcal{D}}(\boldsymbol{x}, y))\right],\quad \text{s.t.}\quad \text{supp}(\mathbb{P}^{\mathcal{T}}) \setminus \text{supp}(\mathbb{P}^{\mathcal{D}}) \neq \varnothing.
    \label{eq:dgdd}
\end{equation}
Unlike conventional DD settings that rely on the strict i.i.d.\ assumption, DGDD explicitly targets robustness to domain shift by evaluating on a target distribution $\mathbb{P}^{\mathcal{T}}$ that is not fully covered by the source distribution $\mathbb{P}^{\mathcal{D}}$. By explicitly modeling unseen domain characteristics, this formulation unifies ID and OOD evaluation under a single objective: a proxy model trained on the distilled dataset $\hat{\mathcal{D}}$ should generalize robustly to both seen source domains and unseen target domains.

\subsection{Dissecting Distribution Matching}\label{sec:dissect}
\begin{wrapfigure}[13]{r}{0.3\linewidth} 
  \vspace{-8mm}
  \centering
    \includegraphics[width=0.95\linewidth]{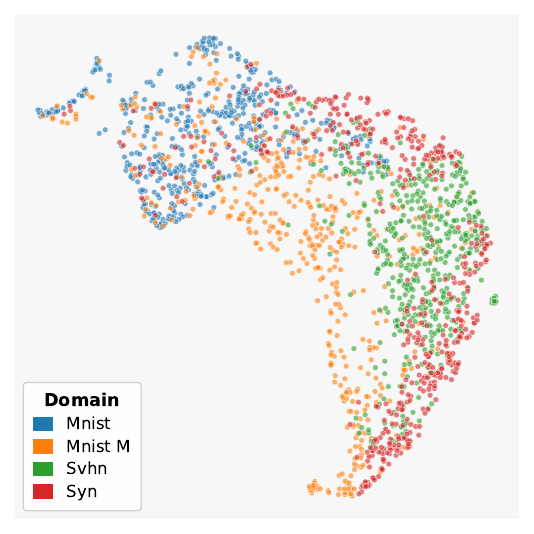}
    \vspace{-3mm}
    \caption{UMAP visualization of feature statistics on the Digits-DG dataset. Each color indicates a distinct domain.}
    \label{fig:umap_domain}
\end{wrapfigure}

Distribution Matching~(DM)~\citep{dm} updates a synthetic dataset $\hat{\mathcal{D}}$ by minimizing the discrepancy between the feature distributions of synthetic and real data, computed per class:
\begin{equation}
    \mathcal{L}_{\mathrm{DM}} = \sum_{c=1}^{C} \left\| \mathbb{E}_{\hat{\boldsymbol{x}} \in \hat{\mathcal{D}}_c} [\psi(\hat{\boldsymbol{x}})] - 
    \mathbb{E}_{\boldsymbol{x} \in \mathcal{D}_c} [\psi(\boldsymbol{x})] \right\|^2,
    \label{eq:dm}
\end{equation}
where $\psi$ is a feature extractor, and $\hat{\mathcal{D}}_c$ and $\mathcal{D}_c$ are the subsets of $\hat{\mathcal{D}}$ and $\mathcal{D}$, respectively, belonging to the $c$-th class. 
Based on Eq.~\ref{eq:dm}, the $i$-th synthetic sample $\synx_i$ is updated via the gradient induced by $\mathcal{L}_\mathrm{DM}$ with a learning rate $\eta$:
\begin{equation}
    \synx_i \leftarrow \synx_i - \eta \cdot \boldsymbol{g}_i,\quad \text{where}\quad \boldsymbol{g}_i=\frac{\partial\mathcal{L}_\mathrm{DM}}{\partial\synx_i}.
    \label{eq:dm_update}
\end{equation}

Crucially, the standard gradient $\boldsymbol{g}_i$ inherently entangles class-discriminative signals with domain-specific information from the source datasets. Figure~\ref{fig:umap_domain} provides empirical evidence for this entanglement. Visualizing the feature statistics (i.e., channel-wise mean and standard deviation) of Digits-DG samples via UMAP~\citep{umap} reveals clear domain-wise clustering. This confirms that the extracted features encode substantial domain-specific characteristics alongside class semantics. 

This observation motivates us to decompose the gradient $\boldsymbol{g}_i$ into two distinct components:
\begin{equation}
    \boldsymbol{g}_i = \boldsymbol{g}_i^\text{class} + \boldsymbol{g}_i^{\text{domain}},
    \label{eq:decompose}
\end{equation}
where $\boldsymbol{g}_i^\text{class}$ represents the class-discriminative information shared across domains, and $\boldsymbol{g}_i^{\text{domain}}$ captures the domain-specific nuances particular to the source datasets. Consequently, we leverage this conceptual decomposition to enhance OOD generalization—explicitly reinforcing the domain-invariant class signals while selectively controlling the influence of domain-specific updates.

\subsection{Spectral Gradient Surgery}
Based on the analysis in Section~\ref{sec:dissect}, we propose \textbf{Spectral Gradient Surgery~(SGS)} to operationalize the decomposition in Eq.~\ref{eq:decompose}. Our core intuition is that cross-domain gradient agreement serves as a highly reliable proxy for distinguishing between spectral components that are shared across domains—and thus class-discriminative—and those that are strictly domain-specific. Building upon this proxy, SGS augments the standard DM update by explicitly steering it with two complementary signals: a \textit{class-discriminative} signal that reinforces the spectral components exhibiting strong cross-domain consensus, and a \textit{domain-specific} signal that preserves distinct domain variations to promote visual diversity within the synthetic dataset.

\paragraph{Domain-wise Gradient Computation.}
Given source datasets $\{\mathcal{D}^s\}_{s=1}^S$ from $S$ domains, we compute a domain-wise gradient for each source domain $\mathcal{D}^s$:
\begin{equation}
    \boldsymbol{g}_i^s = \frac{\partial \mathcal{L}_{\mathrm{DM}}^s}
    {\partial \hat{\boldsymbol{x}}_i},
    \quad \mathcal{L}_{\mathrm{DM}}^s = \sum_{c=1}^{C} \left\| 
    \mathbb{E}_{\hat{\boldsymbol{x}} \in \hat{\mathcal{D}}_c} 
    [\psi(\hat{\boldsymbol{x}})] - 
    \mathbb{E}_{\boldsymbol{x} \in \mathcal{D}^s_c} 
    [\psi(\boldsymbol{x})] \right\|^2,
    \label{eq:dm_domain}
\end{equation}
where $\mathcal{D}^s_c$ denotes the subset of $\mathcal{D}^s$ belonging to the $c$-th class. Intuitively, $\boldsymbol{g}_i^s$ encodes the direction in which $\hat{\boldsymbol{x}}_i$ should move to better represent domain $\mathcal{D}^s$, and thus contains both class-discriminative and domain-specific signals.

\paragraph{Spectral Gradient Decomposition.} 
Note that $\boldsymbol{g}_i^s$ represents an image-space offset from $\hat{\boldsymbol{x}}_i$. Before aggregating the domain-wise gradients, we deliberately operate in the spectral domain rather than the pixel space for two crucial reasons. 
First, pixel-wise cross-domain agreement is highly susceptible to spatial misalignment; computing gradient agreement at a specific spatial coordinate is semantically meaningless. 
Second, a single frequency component represents a global spatial pattern over the entire image. Measuring cross-domain agreement in the spectral domain reflects whether a global, semantically meaningful pattern is shared across domains, naturally preserving spatial coherence.
Specifically, we transform each source-domain gradient $\boldsymbol{g}_i^s$ into the spectral domain signal via the 2D Fast Fourier Transform (FFT) $\mathcal{F}$:
\begin{equation}
    G_i^s = \mathcal{F}(\boldsymbol{g}_i^s).
    \label{eq:fft}
\end{equation}
We then compute the complex mean of $\{G_i^s\}_{s=1}^S$ across all source domains given by 
\begin{equation}
    \bar{G}_i = \frac{1}{S} \sum_{s=1}^{S} G_i^s,
\end{equation}
which captures the gradient patterns consistently shared across all source domains.

However, not all frequency components contribute equally to this consensus: some exhibit strong cross-domain agreement, while others are dominated by domain-specific noise. To quantify the reliability of this consensus, we compute the element-wise resultant length from circular statistics:
\begin{equation}
    r_i = \frac{\left| \sum_{s=1}^{S} G_i^s \right|}{\sum_{s=1}^{S} |G_i^s| + \epsilon} \in [0, 1],
\end{equation}
where $|\cdot|$ denotes the element-wise complex magnitude and $\epsilon$ is a small constant for numerical stability. 
Here, $r_i \rightarrow 1$ indicates that the domain-wise gradients are aligned at a given frequency component, corresponding to a strong class-discriminative signal, whereas $r_i \rightarrow 0$ indicates conflicting gradient directions, corresponding to domain-specific noise.
Weighting $\bar{G}_i$ by $r_i$ selectively preserves domain-invariant consensus while attenuating domain-specific noise, effectively realizing the decomposition.
\begin{equation}
\begin{aligned}
    \boldsymbol{g}_i^\text{class} &= \mathcal{F}^{-1}
    (\bar{G}_i \cdot r_i), \\    \boldsymbol{g}_i^{\text{domain}, s} &= \mathcal{F}^{-1}
    (G_i^s - \bar{G}_i),
\end{aligned}
\label{eq:sgs_decompose}
\end{equation}
where $\boldsymbol{g}_i^\text{class}$ preserves components with reliable consensus~($r_i \to 1$) while attenuating conflicting ones~($r_i \to 0$), and $\boldsymbol{g}_i^{\text{domain},s}$ captures the deviation of domain $s$ from the consensus, encoding domain-specific characteristics of $\mathcal{D}^s$.
We provide a formal theoretical justification in Appendix~\ref{app:theory}.

\paragraph{Adaptive Synthetic Data Optimization.} 
Three complementary gradient signals are used to update the model.
First, $\boldsymbol{g}_i$ — the standard DM gradient computed against the union of all source domains — provides the primary update direction, preserving the strong baseline performance of DM.
In addition, the class-discriminative signal $\boldsymbol{g}^{\text{class}}_i$ reinforces gradient patterns that are consistent across source domains, guiding the synthetic data toward domain-invariant representations. 
The domain-specific signal $\boldsymbol{g}^{\text{domain}, s(i)}_i$ preserves distinct domain characteristics, where $s(i) \in \{1, \dots, S\}$ denotes the pre-assigned source domain index for the $i$-th synthetic sample $\hat{\boldsymbol{x}}_i$. 
To encourage diversity within the distilled dataset, this assignment is uniformly distributed such that every source domain is allocated an equal number of synthetic samples.
Consequently, the synthetic data sample is optimized as
\begin{equation}
    \hat{\boldsymbol{x}}_i \leftarrow \hat{\boldsymbol{x}}_i - 
    \eta \cdot \left(
    \boldsymbol{g}_i +
    \lambda_c \cdot \boldsymbol{g}_i^\text{class} + 
    \lambda_d \cdot \boldsymbol{g}_i^{\text{domain}, s(i)}
    \right),
    \label{eq:sgs_final}
\end{equation}
where $\lambda_c$ and $\lambda_d$ control the strength of each signal. 
Note that setting $\lambda_c=0$ and $\lambda_d=0$ recovers the standard DM update, demonstrating that SGS generalizes DM.

\section{Experiments}\label{sec:exp}

\subsection{Experimental Setup}
\noindent\textbf{Datasets.}\quad
To verify the effectiveness of the proposed method, we employ datasets with diverse scales: Digits-DG~\citep{digitsDG}, PACS~\citep{pacs}, and CORe50-Hard~\citep{core50}. 
Digits-DG is a small-scale dataset with 10 classes at $32 \times 32$ resolution, comprising four digit domains: 
MNIST~(M)~\citep{mnist}, MNIST-M~(N)~\citep{syn}, SVHN~(V)~\citep{svhn}, and SYN~(Y)~\citep{syn}.
PACS is a domain generalization benchmark at medium scale ($64 \times 64$ resolution), consisting of four domains: Photo~(P), Art painting~(A), Cartoon~(C), and Sketch~(S), across 7 classes. 
For large-scale evaluation, we construct CORe50-Hard from CORe50 by selecting 10 classes four sessions (s5, s7, s9, s10) that span the most diverse visual conditions, containing real-world images at $128 \times 128$ resolution.

\begin{table}[t]
    \caption{Multi-domain generalization performance on the Digits-DG, PACS, and CORe50-Hard datasets with IPC=10 and IPC=20.}
    \centering
    \resizebox{0.97\linewidth}{!}{
    \begin{tabular}{c|ccccc|ccccc}
    \toprule
    Dataset & \multicolumn{10}{c}{Digits-DG} \\ 
    IPC & \multicolumn{5}{c|}{10} & \multicolumn{5}{c}{20} \\
    Target & M & N & V & Y & Avg. & M & N & V & Y & Avg. \\ \midrule
    \multicolumn{1}{l|}{Whole}
    & 97.8\pmval0.1 & 66.2\pmval0.7& 75.5\pmval0.2& 92.7\pmval0.2 & 83.1\pmval0.3
    & 97.8\pmval0.1 & 66.2\pmval0.7& 75.5\pmval0.2& 92.7\pmval0.2 & 83.1\pmval0.3\\
    \midrule
    
    \multicolumn{1}{l|}{Random} 
    & 42.6\pmval2.2 & 34.2\pmval1.6& 29.0\pmval0.9& 39.2\pmval0.4 & 36.3\pmval1.3
    & 65.9\pmval0.7 & 38.5\pmval1.1 & 38.5\pmval0.3 & 52.6\pmval0.5 & 48.9\pmval0.7\\
    
    \multicolumn{1}{l|}{Herding} 
    & 69.9\pmval1.5 & 38.5\pmval0.9& 39.2\pmval0.6& 51.5\pmval0.7 & 49.8\pmval0.9
    & 78.3\pmval2.0 & 42.4\pmval0.7 & 44.3\pmval0.9 & 62.0\pmval0.7 & 56.8\pmval1.1\\
    \midrule
    
    \multicolumn{1}{l|}{DataDAM} 
    & 80.0\pmval1.6 & 47.4\pmval1.1 & 33.5\pmval0.7 & 46.7\pmval1.8 & 51.9\pmval1.3
    & 88.1\pmval0.7 & 52.0\pmval1.2 & 40.6\pmval1.6 & 62.0\pmval0.6 & 60.7\pmval1.0 \\
    
    \multicolumn{1}{l|}{M3D} 
    & \underline{88.1}\pmval0.5 & 41.5\pmval0.1 & 46.8\pmval0.2 & 62.3\pmval1.0 & 59.7\pmval0.5
    & \underline{90.1}\pmval1.0 & 46.6\pmval0.9 & 56.4\pmval0.3 & 71.2\pmval1.0 & 66.1\pmval0.8 \\ 
    
    \multicolumn{1}{l|}{DM} 
    & 85.3\pmval0.7 & 51.4\pmval1.5 & 53.6\pmval0.9 & 69.9\pmval0.9 & 65.1\pmval1.0
    & 88.4\pmval0.8 & 53.1\pmval1.0 & 56.4\pmval1.1 & \underline{73.7}\pmval0.2 & 67.9\pmval0.8\\
    
    \rowcolor[gray]{0.9}
    \multicolumn{1}{l|}{\quad + SGS} 
    & \textcolor{red}{\textbf{90.3}}\pmval0.4 & \textcolor{red}{\textbf{54.1}}\pmval0.9 & \textcolor{red}{\textbf{55.3}}\pmval0.4 & \textcolor{red}{\textbf{71.0}}\pmval0.8 & \textcolor{red}{\textbf{67.7}}\pmval0.6
    & \textcolor{red}{\textbf{90.7}}\pmval0.5 & \textcolor{red}{54.3}\pmval1.2 & \textcolor{red}{\textbf{58.0}}\pmval0.5 & \textcolor{red}{\textbf{74.3}}\pmval0.8 & \textcolor{red}{\textbf{69.3}}\pmval0.8\\
    
    \multicolumn{1}{l|}{HDD-DM} 
    & 86.3\pmval0.6 & {52.6}\pmval1.1 & 53.5\pmval0.8 & 68.4\pmval1.0 & 65.2\pmval0.9
    & 89.5\pmval0.4 & \underline{54.4}\pmval1.7 & 56.7\pmval1.1 & 73.2\pmval0.3 & 68.5\pmval0.9\\
    
    \rowcolor[gray]{0.9}
    \multicolumn{1}{l|}{\quad + SGS} 
    & \textcolor{red}{87.9}\pmval1.2 & \textcolor{red}{\underline{53.7}}\pmval2.2 & \textcolor{red}{\underline{54.1}}\pmval0.9 & \textcolor{red}{\underline{70.0}}\pmval0.4 & \textcolor{red}{\underline{66.4}}\pmval1.2
    & \textcolor{red}{\textbf{90.7}}\pmval0.3 & \textcolor{red}{\textbf{54.9}}\pmval1.3 & \textcolor{red}{\underline{57.5}}\pmval1.0 & \textcolor{red}{73.4}\pmval0.7 & \textcolor{red}{\underline{69.1}}\pmval0.8\\
    \bottomrule
    \multicolumn{11}{c}{}\\
    
    \toprule
    Dataset & \multicolumn{10}{c}{PACS} \\
    IPC & \multicolumn{5}{c|}{10} & \multicolumn{5}{c}{20} \\
    Target & P & A & C & S & Avg. & P & A & C & S & Avg. \\ \midrule
    \multicolumn{1}{l|}{Whole}
    & 59.1\pmval1.4 & 36.7\pmval0.4 & 52.9\pmval1.2 & 50.2\pmval1.0 & 49.7\pmval1.0
    & 59.1\pmval1.4 & 36.7\pmval0.4 & 52.9\pmval1.2 & 50.2\pmval1.0 & 49.7\pmval1.0 \\
    \midrule
    
    \multicolumn{1}{l|}{Random} 
    & 38.3\pmval1.0 & 22.5\pmval0.1 & 25.4\pmval1.1 & 24.9\pmval2.3 & 27.8\pmval1.1
    & 45.6\pmval1.8 & 23.3\pmval0.8 & 27.7\pmval0.7 & 28.0\pmval0.7 & 31.2\pmval1.0 \\
    
    \multicolumn{1}{l|}{Herding} 
    & 37.9\pmval3.3 & 25.0\pmval0.9 & 30.2\pmval2.1 & 26.4\pmval4.0 & 29.9\pmval2.6
    & 43.7\pmval1.8 & 26.5\pmval0.7 & 32.2\pmval1.6 & 24.8\pmval2.6 & 31.8\pmval1.7 \\
    \midrule
    
    \multicolumn{1}{l|}{DataDAM} 
    & 44.3\pmval1.4 & 27.6\pmval0.9 & 32.5\pmval0.9 & 27.2\pmval4.8 & 32.9\pmval2.0
    & 49.6\pmval1.3 & 29.7\pmval0.4 & 38.3\pmval0.5 & 36.3\pmval2.5 & 38.5\pmval1.2 \\  
    
    \multicolumn{1}{l|}{M3D} 
    & 49.0\pmval0.9 & 27.1\pmval0.6 & \textbf{35.9}\pmval1.4 & 25.5\pmval2.7 & 34.4\pmval1.4
    & 48.6\pmval0.5 & \textbf{30.3}\pmval1.0 & \underline{38.6}\pmval0.7 & \underline{39.3}\pmval2.7 & 39.2\pmval1.2 \\ 
    
    \multicolumn{1}{l|}{DM} 
    & 48.0\pmval0.7 & 27.9\pmval0.5 & 33.8\pmval1.5 & 32.6\pmval2.2 & 35.6\pmval1.2
    & 49.2\pmval0.4 & 28.8\pmval1.0 & 35.4\pmval2.2 & 38.1\pmval2.7 & 37.9\pmval1.6 \\   
    
    \rowcolor[gray]{0.9}
    \multicolumn{1}{l|}{\quad + SGS} 
    & \textcolor{red}{\textbf{53.8}}\pmval0.4 & \textcolor{red}{\underline{28.4}}\pmval1.1 & \textcolor{red}{\underline{35.1}}\pmval1.5 & \textcolor{red}{\textbf{37.2}}\pmval1.8 & \textcolor{red}{\textbf{38.6}}\pmval1.2
    & \textcolor{red}{\textbf{53.2}}\pmval1.0 & \textcolor{red}{29.1}\pmval0.6 & \textcolor{red}{37.7}\pmval2.3 & \textcolor{red}{\textbf{39.8}}\pmval2.5 & \textcolor{red}{\textbf{40.0}}\pmval1.6 \\ 
    
    \multicolumn{1}{l|}{HDD-DM} 
    & 47.8\pmval0.4 & \underline{28.4}\pmval0.7 & 34.1\pmval1.8 & 29.2\pmval3.0 & 34.9\pmval1.5
    & 48.9\pmval1.2 & 29.5\pmval1.2 & 37.1\pmval1.7 & 32.5\pmval1.7 & 37.0\pmval1.5 \\
    \rowcolor[gray]{0.9}
    
    \multicolumn{1}{l|}{\quad + SGS} 
    & \textcolor{red}{\underline{50.1}}\pmval2.2 & \textcolor{red}{\textbf{29.1}}\pmval0.9 & \textcolor{red}{\textbf{35.9}}\pmval1.1 & \textcolor{red}{\underline{33.4}}\pmval1.9 & \textcolor{red}{\underline{37.1}}\pmval1.5
    & \textcolor{red}{\underline{51.8}}\pmval0.7 & \textcolor{red}{\underline{29.9}}\pmval0.8 & \textcolor{red}{\textbf{39.3}}\pmval1.2 & \textcolor{red}{37.5}\pmval2.5  & \textcolor{red}{\underline{39.6}}\pmval1.3 \\
    \bottomrule
    \multicolumn{11}{c}{}\\
    
    \toprule
    Dataset & \multicolumn{10}{c}{CORe50-Hard} \\
    IPC & \multicolumn{5}{c|}{10} & \multicolumn{5}{c}{20} \\
    Target & s5 & s7 & s9 & s10 & Avg. & s5 & s7 & s9 & s10 & Avg. \\ \midrule
    \multicolumn{1}{l|}{Whole}
    & 54.4\pmval2.0 & 46.5\pmval1.7 & 50.6\pmval1.5 & 26.7\pmval1.2 & 44.6\pmval1.6
    & 54.4\pmval2.0 & 46.5\pmval1.7 & 50.6\pmval1.5 & 26.7\pmval1.2 & 44.6\pmval1.6\\
    \midrule
    
    \multicolumn{1}{l|}{Random} 
    & 23.0\pmval0.4 & 19.8\pmval0.3 & 20.7\pmval0.9 & 16.8\pmval0.4 & 20.1\pmval0.5
    & 24.2\pmval0.6 & 26.0\pmval1.1 & 24.4\pmval1.0 & 20.0\pmval0.9 & 23.7\pmval0.9\\
    
    \multicolumn{1}{l|}{Herding} 
    & 24.7\pmval1.4 & 20.9\pmval0.8 & 24.6\pmval1.9 & 19.9\pmval1.1 & 22.5\pmval1.3
    & 28.1\pmval1.6 & 27.1\pmval0.6 & 28.9\pmval0.5 & 19.9\pmval0.7 & 26.0\pmval0.9\\
    \midrule
    
    \multicolumn{1}{l|}{DataDAM} 
    & \underline{29.3}\pmval2.5 & \underline{31.7}\pmval0.7 & 29.1\pmval1.0 & 20.0\pmval1.0 & \underline{27.5}\pmval1.3
    & \underline{31.1}\pmval1.0 & 28.9\pmval1.0 & 28.0\pmval1.0 & 20.2\pmval1.0 & 27.1\pmval1.0 \\  
    
    \multicolumn{1}{l|}{M3D} 
    & 20.4\pmval0.8 & 30.0\pmval1.4 & 26.0\pmval0.7 & 18.3\pmval0.6 & 23.7\pmval0.9
    & 22.0\pmval1.2 & 29.1\pmval2.0 & 28.5\pmval1.4 & \textbf{21.6}\pmval0.8 & 25.3\pmval1.4 \\ 
    
    \multicolumn{1}{l|}{DM} 
    & 28.4\pmval1.7 & 31.4\pmval2.0 & \underline{30.3}\pmval2.2 & 19.3\pmval1.4 & 27.4\pmval1.8
    & 30.5\pmval2.2 & \underline{32.5}\pmval2.1 & \underline{31.9}\pmval0.9 & 20.5\pmval1.1 & \underline{28.9}\pmval1.6\\
    
    \rowcolor[gray]{0.9}
    \multicolumn{1}{l|}{\quad + SGS} 
    & \textcolor{red}{\textbf{29.8}}\pmval1.9 & \textcolor{red}{\textbf{32.6}}\pmval2.1 & \textcolor{red}{\textbf{32.3}}\pmval1.7 & \textcolor{red}{\underline{21.0}}\pmval1.1 & \textcolor{red}{\textbf{28.9}}\pmval1.7
    & \textcolor{red}{\textbf{32.0}}\pmval2.0 & \textcolor{red}{\textbf{34.7}}\pmval0.6 & \textcolor{red}{\textbf{34.0}}\pmval1.1 & \textcolor{red}{\underline{20.9}}\pmval1.0 & \textcolor{red}{\textbf{30.4}}\pmval1.2\\
    
    \multicolumn{1}{l|}{HDD-DM} 
    & {25.6}\pmval1.4 & 25.4\pmval2.1 & 24.6\pmval0.9 & 18.7\pmval1.3 & 23.6\pmval1.4
    & 27.7\pmval2.1 & 30.9\pmval0.9 & 29.3\pmval0.6 & 19.0\pmval1.4 & 26.7\pmval1.3\\
    
    \rowcolor[gray]{0.9}
    \multicolumn{1}{l|}{\quad + SGS} 
    & \textcolor{red}{29.0}\pmval1.0 & \textcolor{red}{28.9}\pmval1.6 & \textcolor{red}{28.6}\pmval0.9 & \textcolor{red}{\textbf{21.3}}\pmval1.8 & \textcolor{red}{27.0}\pmval1.3
    & \textcolor{red}{30.6}\pmval2.0 & \textcolor{red}{32.0}\pmval1.4 & \textcolor{red}{29.4}\pmval1.4 & \textcolor{red}{20.1}\pmval1.0 & \textcolor{red}{28.0}\pmval1.5\\
    \bottomrule
    \end{tabular}
    }
    \label{tab:mdg}
\end{table}

\noindent\textbf{Implementation Details.}\quad
We implement all experiments in PyTorch. 
For the feature extractor $\psi$, we employ ConvNets with 3, 4, and 5 layers for Digits-DG, PACS, and CORe50-Hard, respectively, scaling the network depth according to the dataset resolution following prior works~\citep{hdd, m3d}. 
For distillation, we initialize the synthetic samples using real images uniformly sampled across the source domains—ensuring an equal contribution from each domain—and optimize them using Stochastic Gradient Descent (SGD) for 20,000 iterations. 
Following standard DM practice, we apply differentiable augmentation~(DSA)~\citep{dsa} during the distillation process. 
To evaluate the distilled datasets, we train ConvNet classifiers from scratch for 1{,}000 epochs using SGD. 
All reported results are averaged over five independent runs, and we provide both the mean and standard deviation.
Experiments on CORe50-Hard are conducted using a single A6000 GPU, while all other experiments use a single RTX 3090 GPU.

\noindent\textbf{Evaluation Protocols.}\quad
We evaluate the distilled datasets under two distinct domain-shift settings: multi-domain generalization (MDG) and single-domain generalization (SDG, Section~\ref{sec:ext_sdg}). 
For MDG, we adopt a strict leave-one-domain-out protocol: we select one domain as the strictly unseen target for evaluation and use all remaining domains as sources for distillation, reporting the top-1 classification accuracy (\%) on the target domain. 
For SDG, we distill the dataset from a single source domain and evaluate the trained model on each of the remaining domains as unseen targets. For every source choice, we report the average top-1 accuracy across all corresponding target domains.

\subsection{Performance Comparison}
We evaluate the OOD generalization performance of SGS across three benchmark datasets under two IPC settings of 10 and 20. 
To rigorously assess the effectiveness of our approach, we compare it against a diverse set of baselines: whole-dataset training~(\textit{Whole}) as an upper bound, coreset selection methods~(\textit{Random}, \textit{Herding}~\citep{herding}), and state-of-the-art DM methods~(\textit{DM}~\citep{dm}, \textit{DataDAM}~\citep{datadam}, \textit{M3D}~\citep{m3d}, \textit{HDD-DM}~\citep{hdd}). 
Note that, for a fair comparison, we disable the multi-formation function in \textit{M3D}.
To validate the plug-and-play compatibility of the proposed SGS, we also integrate it into both \textit{DM} and \textit{HDD-DM}. 
In the Tables, the best and second-best results are highlighted in \textbf{bold} and \underline{underlined}, respectively. 

\noindent\textbf{Multi-domain Generalization Performance.}\quad
We evaluate the proposed SGS primarily in the multi-domain generalization (MDG) setting using a leave-one-domain-out protocol. As reported in Table~\ref{tab:mdg}, we observe that recent state-of-the-art DD methods, such as \textit{DataDAM} and \textit{M3D}, often struggle in this multi-domain distillation setting with poor generalization performance. In several cases, they even fall behind the vanilla DM baseline. This observation aligns with the findings of \citet{dam} that conventional DD methods are fundamentally designed under the assumption of a single, homogeneous source domain. When distilling a multi-domain source dataset, their matching objectives indiscriminately entangle conflicting domain characteristics, leading to sub-optimal distillation. 
In contrast, the proposed SGS effectively disentangles these signals and consistently improves the average OOD accuracy of its underlying DM methods. Interestingly, the performance margins vary depending on the synthetic dataset budget. At a highly constrained budget (IPC=10), SGS yields substantial absolute gains (e.g., +2.6\% for DM on Digits-DG, +3.4\% for HDD-DM on CORe50-Hard), demonstrating its exceptional data efficiency. 
Furthermore, on datasets with severe visual domain shifts like PACS, SGS yields robust and consistent absolute gains across both IPC settings (+3.0\% and +2.2\% over DM and HDD-DM at IPC=10, and +2.1\% and +2.6\% at IPC=20). We attribute this to the fact that simply increasing the data budget is insufficient to overcome severe domain shifts without explicitly disentangling class-discriminative from domain-specific information, a capability uniquely provided by SGS. Overall, these results indicate that SGS provides stable OOD improvements across diverse dataset scales and resolutions, firmly validating its plug-and-play compatibility across different DM architectures.

\begin{table}[t]
    \caption{Single-domain generalization performance on the Digits-DG dataset with IPC=10 and IPC=20.}
    \centering
    \resizebox{0.97\linewidth}{!}{
    \begin{tabular}{c|ccccc|ccccc}
    \toprule
    IPC & \multicolumn{5}{c|}{10} & \multicolumn{5}{c}{20} \\
    Source & M & N & V & Y & Avg. & M & N & V & Y & Avg. \\ \midrule
    \multicolumn{1}{l|}{Whole}
    & 45.3\pmval0.7 & 62.3\pmval0.6 & 64.6\pmval0.8 & 70.4\pmval0.3 & 60.7\pmval0.6
    & 45.3\pmval0.7 & 62.3\pmval0.6 & 64.6\pmval0.8 & 70.4\pmval0.3 & 60.7\pmval0.6\\
    \midrule
    \multicolumn{1}{l|}{Random} 
    & 38.1\pmval1.0 & 30.1\pmval0.6 & 29.1\pmval1.0 & 33.2\pmval0.5 & 32.6\pmval0.8
    & 39.7\pmval0.7 & 49.7\pmval1.2 & 41.4\pmval2.1 & 41.6\pmval2.0 & 43.1\pmval1.5\\
    \multicolumn{1}{l|}{Herding} 
    & 38.9\pmval1.2 & 48.5\pmval0.7 & 39.7\pmval0.7 & 36.9\pmval0.8 & 41.0\pmval0.9
    & 40.4\pmval0.5 & 51.2\pmval1.1 & 45.4\pmval1.1 & 43.1\pmval1.2 & 45.0\pmval1.0\\
    \midrule
    \multicolumn{1}{l|}{DM} 
    & 42.6\pmval0.8 & \underline{54.7}\pmval0.6 & 45.0\pmval1.3 & \underline{54.8}\pmval0.9 & 49.3\pmval0.9
    & 42.1\pmval1.5 & 57.2\pmval0.8 & 50.0\pmval0.8 & 55.3\pmval0.7 & 51.2\pmval1.0\\
    \rowcolor[gray]{0.9}
    \multicolumn{1}{l|}{\quad+SGS} 
    & \textcolor{red}{\textbf{45.5}}\pmval0.8 & \textcolor{red}{\textbf{55.6}}\pmval0.1 & \textcolor{red}{\textbf{53.2}}\pmval1.0 & \textcolor{red}{\textbf{55.4}}\pmval0.7 & \textcolor{red}{\textbf{52.4}}\pmval0.7
    & \textcolor{red}{\textbf{46.0}}\pmval1.5 & \textcolor{red}{\textbf{58.3}}\pmval0.6 & \textcolor{red}{\textbf{54.4}}\pmval0.3 & \textcolor{red}{\underline{55.8}}\pmval0.7 & \textcolor{red}{\textbf{53.6}}\pmval0.8\\
    \multicolumn{1}{l|}{HDD-DM} 
    & 42.7\pmval0.5 & \underline{54.7}\pmval0.5 & 46.4\pmval1.6 & 50.4\pmval1.0 & 48.6\pmval0.9
    & 42.5\pmval0.8 & 56.6\pmval0.2 & 49.4\pmval0.6 & 53.3\pmval0.4 & 50.5\pmval0.5\\
    \rowcolor[gray]{0.9}
    \multicolumn{1}{l|}{\quad+SGS} 
    & \textcolor{red}{\underline{43.4}}\pmval0.8 & \textcolor{red}{\textbf{55.6}}\pmval0.4 & \textcolor{red}{\underline{50.4}}\pmval0.8 & \textcolor{red}{53.9}\pmval0.8 & \textcolor{red}{\underline{50.8}}\pmval0.7
    & \textcolor{red}{\underline{44.3}}\pmval0.8 & \textcolor{red}{\underline{58.0}}\pmval0.8 & \textcolor{red}{\underline{53.2}}\pmval1.2 & \textcolor{red}{\textbf{56.7}}\pmval0.6 & \textcolor{red}{\underline{53.1}}\pmval0.9\\
    \bottomrule
    \end{tabular}
    }
    \label{tab:sdg}
\end{table}

\noindent\textbf{Single-domain Generalization Performance.}\quad
We further evaluate SGS in the single-domain generalization (SDG) setting, where only one source domain is available for distillation. 
To broaden the applicability of SGS, we partition the source dataset into $K$ pseudo-domains. 
Directly motivated by the empirical observation in Figure~\ref{fig:umap_domain} that style statistics effectively cluster by domain, we extract these statistics from intermediate feature maps and apply K-means clustering to identify latent domain-specific patterns (detailed procedures are provided in Appendix~\ref{sec:ext_sdg}). 
These pseudo-domains then serve as a proxy for ground-truth domain splits, enabling the SGS. 
Table~\ref{tab:sdg} reports the average top-1 accuracy across the three unseen target domains for each choice of source domain, demonstrating consistent OOD improvements across all source domains for both DM baselines. 
These results indicate that the pseudo-domains identified through feature-statistic clustering, while inherently noisier than ground-truth domain labels, provide a highly effective proxy for the cross-domain agreement signal, allowing SGS to operate robustly even in the constrained single-source setting.
Additional results, including visualizations and analysis of $K$, are provided in Appendix~\ref{sec:more_sdg}.

\begin{table*}[t]
\begin{minipage}[t]{0.46\linewidth}
    \caption{Performance comparison with post-hoc DG methods on PACS at IPC=10.}
    \centering
    \begin{threeparttable}
    \resizebox{\linewidth}{!}{%
    \setlength{\tabcolsep}{1.8pt}
    \begin{tabular}{c|cccc|c}
    \toprule
    Target & P & A & C & S & Overhead \\ \midrule
    \multicolumn{1}{l|}{DM}
    & \underline{48.0}\pmval0.7 & 27.9\pmval0.5 & 33.8\pmval1.5 & 32.6\pmval2.2 & 1.0$\times$ \\ 
    \multicolumn{1}{l|}{\quad+FACT$^\dagger$}
    & 45.3\pmval1.4 & \underline{31.6}\pmval0.4 & \textbf{35.5}\pmval1.5 & 34.3\pmval2.0 & 2.1$\times$ \\ 
    \multicolumn{1}{l|}{\quad+MixStyle$^\dagger$}
    & 47.1\pmval1.0 & 27.8\pmval0.5 & 33.9\pmval1.0 & 31.8\pmval2.6 & 1.2$\times$ \\ 
    \multicolumn{1}{l|}{\quad+AdvFreq$^\dagger$}
    & 41.4\pmval2.2 & \textbf{31.7}\pmval0.5 & 31.5\pmval0.5 & \textbf{40.5}\pmval0.6 & 6.8$\times$ \\ 
    \rowcolor[gray]{0.9}
    \multicolumn{1}{l|}{\quad+SGS}
    & \textbf{53.8}\pmval0.4 & 28.4\pmval1.1 & \underline{35.1}\pmval1.5 & \underline{37.2}\pmval1.8 & \textbf{1.0$\times$}\\
    \bottomrule
    \end{tabular}%
    }
    \begin{tablenotes}
        \scriptsize
        {\item[$\dagger$]\!\!\!\! Post-hoc methods applied on top of the DM-distilled dataset.}
    \end{tablenotes}
    \end{threeparttable}
    \label{tab:dg}
\end{minipage}
\hfill
\begin{minipage}[t]{0.51\linewidth}
    \caption{Effect of $\boldsymbol{g}^\text{class}$, $\boldsymbol{g}^\text{domain}$, and $\boldsymbol{g}$, evaluated on Digits-DG at IPC=10.}
    \centering
    \resizebox{0.96\linewidth}{!}{
    \setlength{\tabcolsep}{2pt}
    \begin{tabular}{ccc|ccccc}
    \toprule
    $\boldsymbol{g}^\text{class}$ & $\boldsymbol{g}^\text{domain}$ & $\boldsymbol{g}$ & M & N & V & Y & Avg. \\
    \midrule
     &  & \checkmark &
    85.3\pmval0.7 & 51.4\pmval1.5 & 53.6\pmval0.9 & 69.9\pmval0.9 & 65.1\pmval1.0 \\
    
    \checkmark & & & 
    85.5\pmval0.9 &52.6\pmval1.2 &52.4\pmval1.3 &68.9\pmval1.5 &64.9\pmval1.2 \\
   
    & \checkmark & & 
    65.8\pmval2.5 &42.5\pmval1.0 &27.2\pmval0.7 &43.1\pmval1.2 &44.7\pmval1.4 \\
    
    \checkmark & \checkmark & & 
    88.6\pmval0.9 &51.8\pmval1.0 &52.7\pmval0.8 &67.1\pmval0.5 &65.1\pmval0.8 \\
    
    \rowcolor[gray]{0.9}        
    \checkmark & \checkmark & \checkmark & 
    \textbf{90.3}\pmval0.4 &\textbf{54.1}\pmval0.9 &\textbf{55.3}\pmval0.4 &\textbf{71.0}\pmval0.8 &\textbf{67.7}\pmval0.6 \\
    \bottomrule
    \end{tabular}}
    \label{tab:ablation}
\end{minipage}
\vspace{-4mm}
\end{table*}

\noindent\textbf{Comparison with Post-hoc Domain Generalization.}\quad
We also compare the performance by applying existing DG methods (\textit{FACT}~\citep{fact}, \textit{MixStyle}~\citep{mixstyle}, \textit{AdvFreq}~\citep{advfreq}) post-hoc during downstream training. However, as shown in Table~\ref{tab:dg}, this approach struggles to guarantee consistent improvements. Because conventional DG methods assume the high diversity of large-scale real data, applying them to highly compressed distilled datasets often yields marginal gains or even degrades performance (e.g., \textit{MixStyle}). Furthermore, complex online augmentations in methods like \textit{FACT} and \textit{AdvFreq} drastically increase downstream training time by 2.1$\times$ and 6.8$\times$, fundamentally contradicting the efficiency goal of dataset distillation. In contrast, SGS inherently embeds domain-invariant signals during the offline distillation process. Thus, it achieves the best average OOD generalization without incurring any additional computational overhead (1.0$\times$) during downstream training.

\subsection{Ablation Study}
\begin{wrapfigure}[12]{r}{0.63\linewidth} 
  \vspace{-5mm}
    \centering
    \subfloat[]{\includegraphics[width=0.315\linewidth]{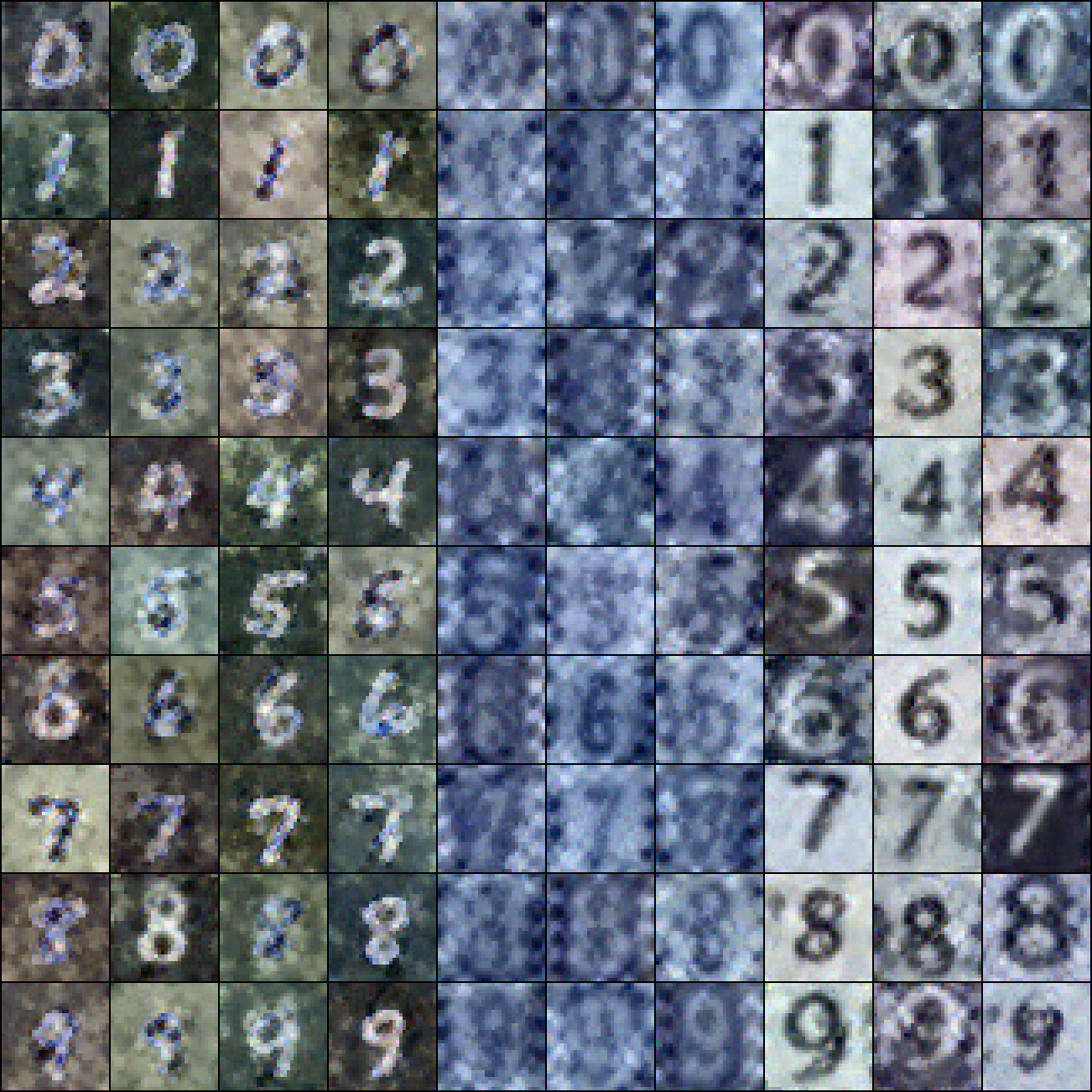}\label{fig:full_sgs}}
    \hfill
    \subfloat[]{\includegraphics[width=0.315\linewidth]{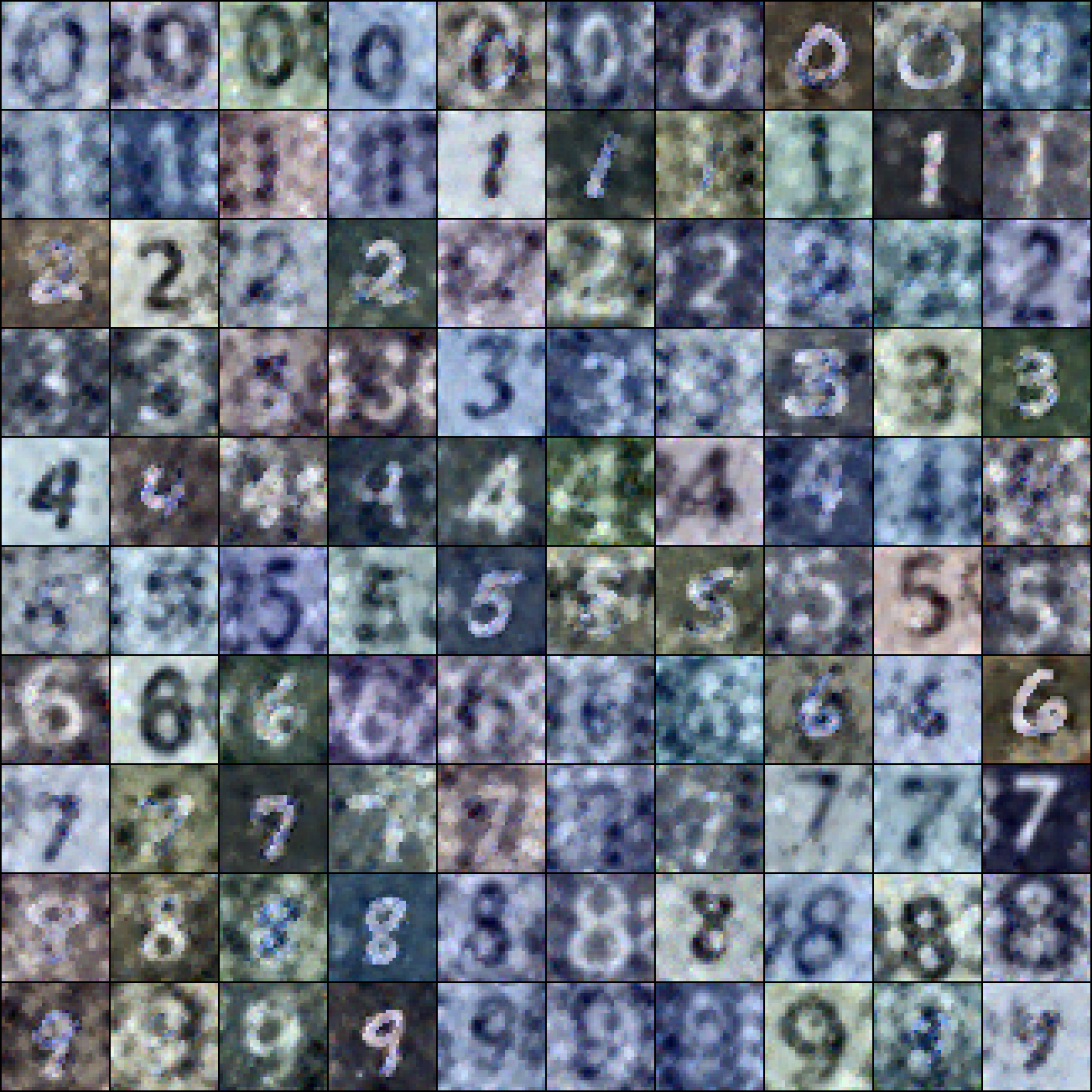}\label{fig:class_only}}
    \hfill
    \subfloat[]{\includegraphics[width=0.315\linewidth]{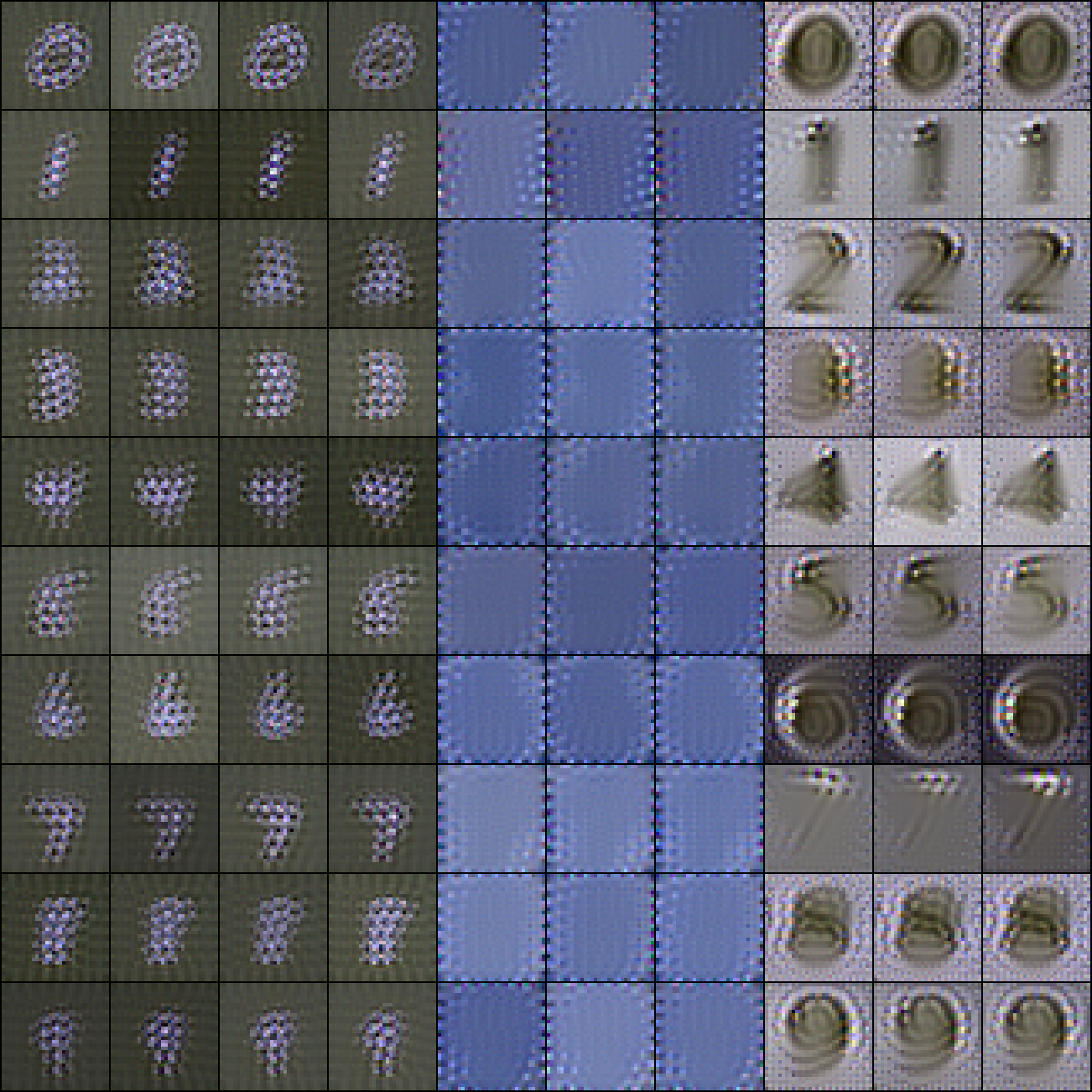}\label{fig:domain_only}}
    \vspace{-2mm}
    \caption{Visualization of distilled Digits-DG images (IPC=10) under different gradient update strategies: (a)~SGS, (b)~$\boldsymbol{g}^\text{class}$ only, and (c)~$\boldsymbol{g}^\text{domain}$ only.}
    \label{fig:distilled}
\end{wrapfigure}
To analyze the contribution of each gradient component in Eq.~\ref{eq:sgs_final}, we conduct an ablation study on Digits-DG (IPC=10). 
The results are reported in Table~\ref{tab:ablation}, where each column represents the OOD accuracy for a specific unseen target domain.
When using $\boldsymbol{g}$ alone, SGS reduces to standard DM and serves as the baseline. 
Replacing $\boldsymbol{g}$ with $\boldsymbol{g}^\text{class}$ alone yields comparable OOD accuracy on M and N but slightly degrades performance on V and Y, suggesting that the cross-domain agreement signal alone is insufficient as a primary update direction. 
This is also reflected in Figure~\ref{fig:class_only}, where the distilled images exhibit clear class semantics but lack visual diversity, sharing highly uniform domain characteristics across samples. 
In contrast, relying solely on $\boldsymbol{g}^\text{domain}$ causes severe performance degradation across all target domains. 
As shown in Figure~\ref{fig:domain_only}, while these images capture domain-specific appearances—which inherently include some recognizable digit-like structures—they fail to encode fine-grained class-discriminative features required for accurate classification. 
Combining only the two SGS signals ($\boldsymbol{g}^\text{class} + \boldsymbol{g}^\text{domain}$, without $\boldsymbol{g}$) improves OOD accuracy on M and N, but falls short of the vanilla DM baseline on V and Y, indicating that the standard DM gradient $\boldsymbol{g}$ is indispensable as the primary optimization direction. 
Finally, combining all three components yields the best performance across every target domain. This demonstrates that $\boldsymbol{g}^\text{class}$ and $\boldsymbol{g}^\text{domain}$ provide essential and complementary refinements to the primary signal $\boldsymbol{g}$. As visualized in Figure~\ref{fig:full_sgs}, the resulting distilled images successfully preserve faithful class-discriminative information while exhibiting rich cross-domain visual diversity.

\subsection{Further Analysis}

\noindent\textbf{Visualization of $\boldsymbol{g}$ and $\boldsymbol{g}^\text{class}$.}\quad
To verify whether the proposed spectral decomposition effectively isolates class-discriminative information, we visualize the standard DM gradient $\boldsymbol{g}$ and the class-discriminative signal $\boldsymbol{g}^{\text{class}}$ extracted by SGS in Figure~\ref{fig:grad_class}. 
While $\boldsymbol{g}^{\text{class}}$ exhibits sharper, highly recognizable digit structures, $\boldsymbol{g}$ displays visually noisier patterns, where the underlying digit shapes are partially obscured by textural variations caused by domain-specific signals. 
This qualitative contrast strongly supports our core motivation discussed in Section~\ref{sec:dissect}: while the standard DM gradient inevitably entangles class-discriminative and domain-specific information, $\boldsymbol{g}^{\text{class}}$—derived from cross-domain agreement in the spectral domain—successfully distills and recovers the shared class signal more clearly.

\begin{wrapfigure}{r}{0.5\linewidth} 
    \vspace{-4mm}
    \centering
    \captionof{table}{In-distribution performance on Digits-DG (IPC=10) under the MDG setting.}
    \label{tab:id_mdg}
    \resizebox{\linewidth}{!}{
    \setlength{\tabcolsep}{4pt}
    \begin{tabular}{c|ccccc}
        \toprule
        Target & M & N & V & Y & Avg. \\
        \midrule
        \multicolumn{1}{l|}{DM} & 73.6\pmval{0.5} & 82.8\pmval{0.6} & 82.3\pmval{0.4} & 77.7\pmval{0.4} & 79.1\pmval{0.5} \\
        \rowcolor[gray]{0.9}
        \multicolumn{1}{l|}{\quad+ SGS} & 73.7\pmval{0.9} & 82.3\pmval{0.8} & 81.7\pmval{0.4} & 77.9\pmval{0.5} & 78.9\pmval{0.7} \\
        \midrule
        \multicolumn{1}{l|}{HDD-DM} & 74.1\pmval{0.6} & 83.2\pmval{0.4} & 82.1\pmval{0.3} & 77.5\pmval{0.2} & 79.2\pmval{0.4} \\
        \rowcolor[gray]{0.9}        
        \multicolumn{1}{l|}{\quad+ SGS} & 74.4\pmval{0.4} & 82.4\pmval{0.2} & 83.4\pmval{0.2} & 77.5\pmval{0.6} & 79.4\pmval{0.4} \\
        \bottomrule
    \end{tabular}%
    }
\end{wrapfigure}
\noindent\textbf{In-distribution Performance.}\quad
Table~\ref{tab:id_mdg} reports the ID performance on Digits-DG at IPC=10, where the test domains coincide with the source domains used for distillation. We observe that SGS yields ID performance comparable to that of the underlying base methods on average, despite substantially improving OOD generalization performance.

\begin{figure}[t]
\centering
\begin{minipage}[c]{0.49\linewidth}
    \centering
    \subfloat[$\boldsymbol{g}$]{\includegraphics[width=0.488\linewidth]{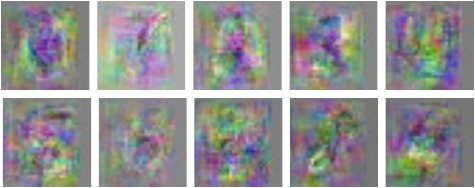}}\hfill
    \subfloat[$\boldsymbol{g}^\text{class}$]{\includegraphics[width=0.488\linewidth]{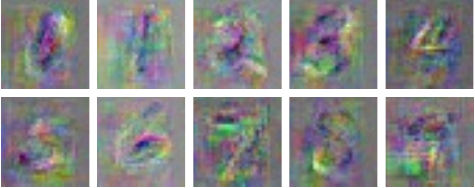}}
    \caption{Visualization of $\boldsymbol{g}$ and $\boldsymbol{g}^{\text{class}}$ on Digits-DG~(Upper row: 0 to 4, Lower row: 5 to 9).}
    \label{fig:grad_class}
\end{minipage}\hfill
\begin{minipage}[c]{0.49\linewidth}
    \centering
    \subfloat[$\lambda_c$]{\includegraphics[width=0.49\linewidth]{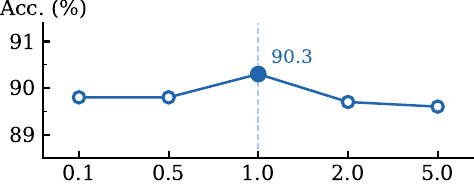}}\hfill
    \subfloat[$\lambda_d$]{\includegraphics[width=0.49\linewidth]{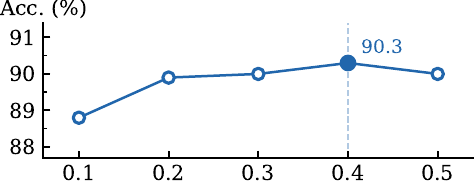}}
    \caption{Performance changes according to (a)~$\lambda_c$ and (b)~$\lambda_d$ on Digits-DG.}
    \label{fig:hyp}
\end{minipage}%
\end{figure}

\noindent\textbf{Hyperparameter Sensitivity.}\quad
Figure~\ref{fig:hyp} illustrates the OOD performance changes on Digits-DG (IPC=10, Target M) with respect to hyperparameters $\lambda_c$ and $\lambda_d$. 
SGS consistently outperforms the baseline DM across the entire range of parameters, affirming its robustness and practical plug-and-play capability. 
The performance trends reflect the inherent trade-offs between the gradient components. As $\lambda_c$ increases, the update becomes more heavily reliant on the cross-domain consensus, which tends to reduce the visual diversity of synthetic samples, as qualitatively observed in Figure~\ref{fig:class_only}.
On the other hand, a larger $\lambda_d$ shifts the focus toward preserving domain-specific characteristics, which can eventually diminish the fine-grained, class-discriminative information necessary for accurate classification. 
These results confirm that a balanced integration of both signals is essential to achieve optimal OOD generalization.

\section{Conclusion}
In this paper, we studied the problem of dataset distillation under domain shift, which explicitly targets robustness to unseen target domains rather than fidelity to the source distribution alone. To address this, we proposed Spectral Gradient Surgery (SGS), which augments the standard DM update with two additional gradient signals derived from decomposition in the spectral domain: a class-discriminative signal that reinforces components agreed across source domains, and a domain-specific signal that encourages diversity of distilled dataset. 
The proposed SGS is plug-and-play compatible with existing DM methods, requiring no modification to their core objectives. 
Experimental results on the Digits-DG, PACS, and CORe50-Hard datasets demonstrate that SGS consistently improves OOD generalization performance across diverse-scale benchmarks, providing the first effective approach to domain generalizable dataset distillation.

\noindent\textbf{Limitation and Future Work.}\quad
SGS is designed and validated within the distribution matching framework, which we focus on for its strong performance and computational efficiency among DD families. While the underlying principle—decomposing gradient signals via cross-domain agreement in the spectral domain—is conceptually applicable to other paradigms such as gradient matching and trajectory matching, extending SGS to these settings is left for future work.

\bibliographystyle{plainnat}
\bibliography{references}

\newpage
\appendix

\section*{Appendix}

\section{Specification of Datasets}\label{app:dataset}
Table~\ref{tab:dataset_stats} summarizes the detailed statistics of the three benchmark datasets evaluated in our experiments. 
Furthermore, to illustrate the visual domain shifts present in these datasets, we provide representative examples in Figure~\ref{fig:dataset_examples}, where rows and columns correspond to distinct domains and classes, respectively.

\begin{table}[h!]
\centering
\caption{Detailed statistics of the benchmark datasets evaluated in our experiments.}
\label{tab:dataset_stats}
\resizebox{\textwidth}{!}{
\setlength{\tabcolsep}{1pt}
\begin{tabular}{lccc}
\toprule
 & Digits-DG~\citep{digitsDG} & PACS~\citep{pacs} & CORe50-Hard~\citep{core50} \\
\midrule
\# Domains & 4 & 4 & 4 \\
\# Classes & 10 & 7 & 10 \\
\midrule
\multirow{4}{*}{Domains}
& MNIST~\citep{mnist} & Photo & s5 \\
& MNIST-M~\citep{syn} & Art painting & s7 \\
& SVHN~\citep{svhn} & Cartoon & s9 \\
& SYN~\citep{syn} & Sketch & s10 \\
\midrule
\# Images per domain 
& 6,000 / 6,000 / 6,000 / 6,000 
& 1,670 / 2,048 / 2,344 / 3,929 
& 14,966 / 14,994 / 14,994 / 14,986 \\
Resolution 
& $32{\times}32$ 
& $64{\times}64$ 
& $128{\times}128$ \\
\bottomrule
\end{tabular}
}
\end{table}

\begin{figure}[h!]
    \centering
        \includegraphics[width=0.85\linewidth]{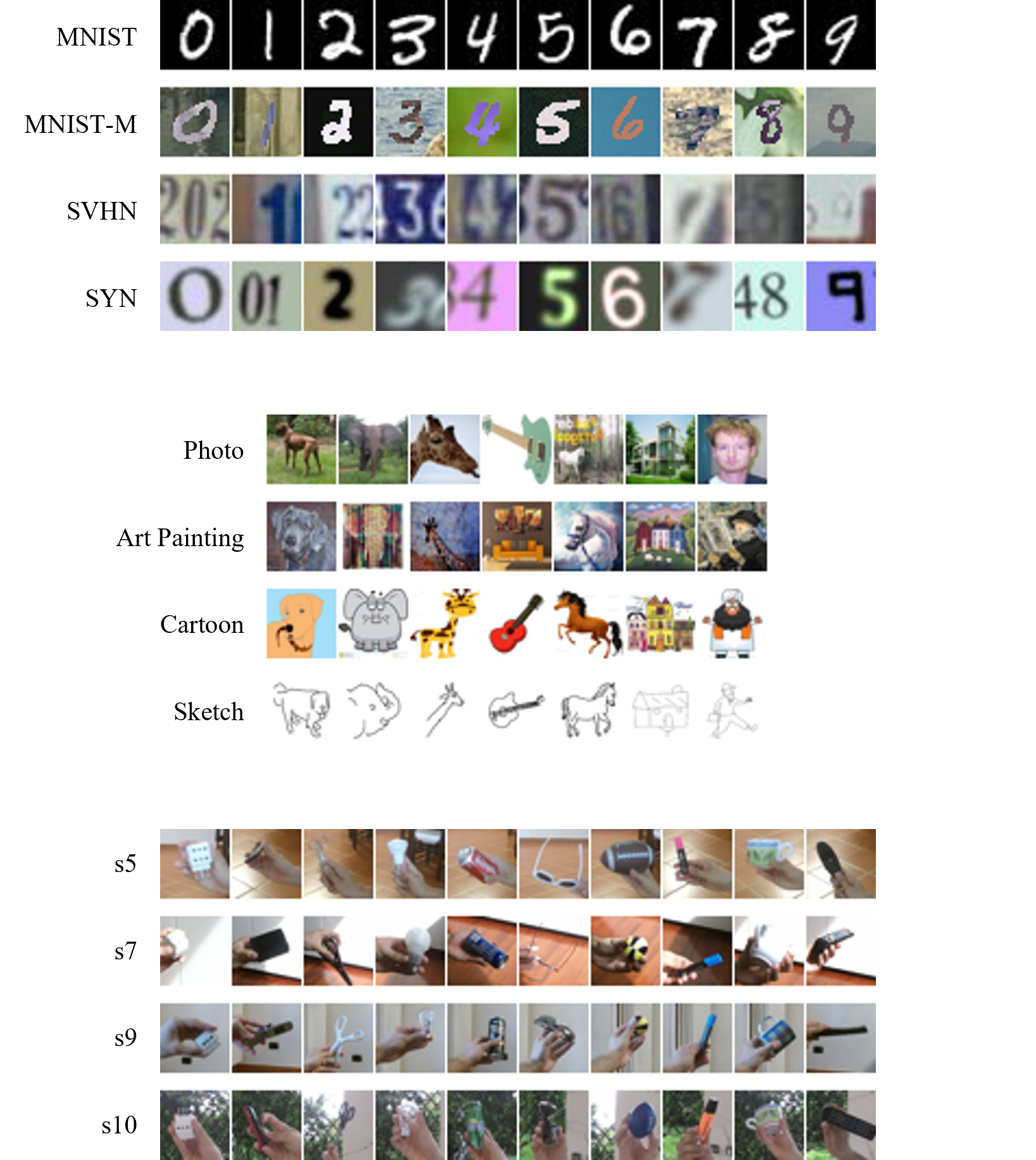}
    \caption{Visual examples from the three benchmark datasets. For each dataset, rows correspond to distinct domains, while columns represent different classes. Note the severe visual domain shifts across the source domains.}
    \label{fig:dataset_examples}
\end{figure}

\section{Theoretical Justification of SGS}\label{app:theory}

We provide a theoretical analysis demonstrating how the spectral gradient decomposition in SGS mathematically recovers the domain-invariant class signal while effectively attenuating domain-specific noise. Our analysis builds on standard concepts from directional statistics.

\subsection{Setup and Assumptions}

For a synthetic sample $\hat{x}_i$ and a specific frequency component $k$, let $G_{i,k}^s$ denote the per-domain spectral gradient of source domain $s \in \{1,\dots,S\}$. 
We posit that this gradient is composed of two components:
\begin{equation}
G_{i,k}^s \;=\; G_{i,k}^{\mathrm{shared}} \;+\; N_{i,k}^s,
\label{eq:decomp}
\end{equation}
where $G_{i,k}^{\mathrm{shared}}$ is a domain-invariant component encoding the class-discriminative signal, and $N_{i,k}^s$ is the domain-specific noise. 
By expressing the gradient in polar form as $G_{i,k}^s = A_{i,k}^s \, e^{j\phi_{i,k}^s}$, we make the following mild assumptions:

\begin{assumption}[Phase Distribution]
\label{assump:phase}
For a given frequency $k$, the phase angles $\{\phi_{i,k}^s\}_{s=1}^{S}$ are drawn from a circular distribution characterized by a \textbf{true phase concentration} $\rho_k \in [0,1]$. Here, $\rho_k \to 1$ indicates that the phases are highly concentrated around a single direction (strong cross-domain agreement), whereas $\rho_k \to 0$ indicates a uniform, random spread of phases (no agreement).
\end{assumption}

\begin{assumption}[Bounded Magnitude]
\label{assump:mag}
The magnitudes $\{A_{i,k}^s\}_{s=1}^{S}$ are strictly bounded within $[m_k, M_k]$ where $0 < m_k \le M_k < \infty$, and are independent of the phases.
\end{assumption}

\subsection{Main Result}

\begin{proposition}[Disentanglement of Class-Discriminative Signals]\label{prop:recovery}
Under Assumptions~\ref{assump:phase} and \ref{assump:mag}, as the number of source domains $S$ increases, the SGS class-discriminative signal $G_{i,k}^{\mathrm{class}} = \bar{G}_{i,k} \cdot r_{i,k}$ defined in Eq.~\ref{eq:sgs_decompose} behaves as follows:
\begin{enumerate}
    \item[(i)] (\textbf{Preservation of domain-invariant signals}) If $\rho_k \to 1$, the SGS update perfectly recovers the true signal: $G_{i,k}^{\mathrm{class}} \to G_{i,k}^{\mathrm{shared}}$.
    \item[(ii)] (\textbf{Attenuation of domain-specific signals}) If $\rho_k \to 0$, the SGS update inherently suppresses the noise: $\mathbb{E}[|G_{i,k}^{\mathrm{class}}|] = \mathcal{O}(1/S)$.
\end{enumerate}
\end{proposition}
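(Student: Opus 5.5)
Both parts rest on one identity. Write $\bar{G}_{i,k}=\frac{1}{S}\sum_s A^s_{i,k}e^{j\phi^s_{i,k}}$ and $r_{i,k}=|\sum_s G^s_{i,k}|/(\sum_s A^s_{i,k}+\epsilon)$. Then
\[
G^{\mathrm{class}}_{i,k}=\bar{G}_{i,k}\,r_{i,k}=\bar{G}_{i,k}\,\frac{S\,|\bar{G}_{i,k}|}{\sum_s A^s_{i,k}+\epsilon},
\qquad
|G^{\mathrm{class}}_{i,k}|\le \frac{S\,|\bar{G}_{i,k}|^2}{S\,m_k}=\frac{|\bar{G}_{i,k}|^2}{m_k},
\]
where the inequality uses Assumption~\ref{assump:mag} ($\epsilon$ only helps the bound, and can be sent to $0$ otherwise). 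Each part then reduces to controlling the weighted mean $\bar{G}_{i,k}$ together with the ratio $r_{i,k}$.

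\textbf{Part (i).} As $\rho_k\to 1$, the circular distribution of Assumption~\ref{assump:phase} concentrates at a common mean direction $\mu_k$, so every $\phi^s_{i,k}\to\mu_k$. Then $|\sum_s G^s_{i,k}|\to\sum_s A^s_{i,k}$, giving $r_{i,k}\to 1$ (up to the $\epsilon$ term, which is $O(\epsilon/(S m_k))$).

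I identify $G^{\mathrm{shared}}_{i,k}$ with the common, phase-aligned part. Averaging the decomposition~\eqref{eq:decomp} gives $\bar{G}_{i,k}=G^{\mathrm{shared}}_{i,k}+\frac1S\sum_s N^s_{i,k}$. In the fully concentrated regime the deviations $N^s_{i,k}$ are only magnitude fluctuations along $e^{j\mu_k}$. I will interpret $G^{\mathrm{shared}}_{i,k}$ as the mean $\mathbb{E}[A]e^{j\mu_k}$, so that by the law of large numbers these fluctuations average to zero as $S$ grows. Hence $\bar{G}_{i,k}\to G^{\mathrm{shared}}_{i,k}$, and multiplying by $r_{i,k}\to1$ yields (i).

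\textbf{Part (ii).} When $\rho_k\to 0$, the phases are i.i.d.\ uniform and independent of the magnitudes. Then $\mathbb{E}[e^{j\phi}]=0$ and
\[
\mathbb{E}\bigl|\textstyle\sum_s A^s e^{j\phi^s}\bigr|^2=\sum_s\mathbb{E}[(A^s)^2]\le S M_k^2,
\]
because the cross terms vanish by independence and the zero mean of $e^{j\phi}$. Therefore $\mathbb{E}|\bar{G}_{i,k}|^2\le M_k^2/S$. Combined with the opening bound, this gives
\[
\mathbb{E}|G^{\mathrm{class}}_{i,k}|\le \frac{M_k^2}{m_k S}=\mathcal{O}(1/S).
\]
For contrast, $\mathbb{E}|\bar{G}_{i,k}|$ alone is only $\mathcal{O}(1/\sqrt S)$. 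The extra factor $r_{i,k}=\mathcal{O}_p(1/\sqrt S)$ supplies the second $1/\sqrt S$, and this is the point of the weighting.

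\textbf{Main obstacle.} The difficulty lies in part (i), not part (ii). The assumptions give a phase model but do not tie the decomposition~\eqref{eq:decomp} to the phase distribution, so "$G^{\mathrm{class}}\to G^{\mathrm{shared}}$" needs a convention. I will state explicitly that when $\rho_k\to1$ the noise $N^s_{i,k}$ is collinear with $G^{\mathrm{shared}}_{i,k}$ and has zero mean across domains, so that $G^{\mathrm{shared}}_{i,k}=\mathbb{E}[A]e^{j\mu_k}$. Under this convention the convergence holds jointly in $\rho_k\to1$ and $S\to\infty$ (almost surely, or in probability). If no such identification is wanted, I would state (i) as $G^{\mathrm{class}}_{i,k}-\bar{G}_{i,k}\to0$, meaning the consensus is passed through unattenuated.
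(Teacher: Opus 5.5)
Your argument is correct. It keeps the paper's two-regime structure but replaces its key estimates.

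\textbf{The paper's route.} It first proves $r_{i,k}\to\rho_k$ by the law of large numbers. For (i) it asserts that phase alignment forces $N^s_{i,k}\to 0$, so $\bar G_{i,k}=G^{\mathrm{shared}}_{i,k}$ and $r_{i,k}=1$ exactly. For (ii) it bounds $\mathbb{E}|\bar G_{i,k}|$ and $\mathbb{E}[r_{i,k}]$ by $\mathcal{O}(1/\sqrt S)$ separately via a random-walk heuristic, then multiplies the two rates.

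\textbf{Your part (ii).} Your pointwise domination $r_{i,k}\le |\bar G_{i,k}|/m_k$ instead reduces (ii) to the second moment $\mathbb{E}|\bar G_{i,k}|^2$, which you compute exactly. This is the more rigorous route. Since $r_{i,k}=S|\bar G_{i,k}|/(\sum_s A^s_{i,k}+\epsilon)$ is essentially proportional to $|\bar G_{i,k}|$, multiplying their expectations is not justified as written. Your inequality repairs exactly that step, gives the explicit constant $M_k^2/(m_k S)$, and shows why the lower bound $m_k>0$ is needed. It also extends directly to small nonzero concentration. For $s\neq t$ the cross terms equal $\mathbb{E}[A^s_{i,k}A^t_{i,k}]\,\rho_k^2$, so $\mathbb{E}|G^{\mathrm{class}}_{i,k}|\le (M_k^2/m_k)(1/S+\rho_k^2)$. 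This says precisely how fast $\rho_k$ must vanish to keep the $\mathcal{O}(1/S)$ rate.

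\textbf{Your part (i).} You treat the noise as collinear, zero-mean magnitude fluctuations that average out as $S\to\infty$, with $G^{\mathrm{shared}}_{i,k}=\mathbb{E}[A]e^{j\mu_k}$. This is more general than the paper's unstated assumption that magnitudes align together with phases. Under that stronger assumption your argument collapses to the paper's exact identity at every finite $S$. The cost is that you need $S\to\infty$ and an explicit convention for $G^{\mathrm{shared}}_{i,k}$; the paper also needs such a convention but never states it.

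\textbf{What you give up.} You do not recover the paper's Step~1, $r_{i,k}\to\rho_k$ at intermediate concentrations. That result is what supports reading $r_{i,k}$ as an estimator of cross-domain agreement in the subsequent remark.
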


\begin{proof} \textbf{Step 1: Convergence of the Resultant Length $r_{i,k}$.}\quad
To analyze the asymptotic behavior of $r_{i,k}$, we divide both the numerator and the denominator by $S$. By the Law of Large Numbers, the empirical averages converge to their respective expectations as $S \to \infty$:
\begin{equation}
    r_{i,k} = \frac{\left| \frac{1}{S} \sum_{s=1}^S A_{i,k}^s \, e^{j\phi_{i,k}^s} \right|}{\frac{1}{S} \sum_{s=1}^S A_{i,k}^s + \frac{\epsilon}{S}} \xrightarrow{S \to \infty} \frac{\left| \mathbb{E}[A_{i,k}^s \, e^{j\phi_{i,k}^s}] \right|}{\mathbb{E}[A_{i,k}^s]}.
\end{equation}
Since the magnitude $A_{i,k}^s$ and the phase $\phi_{i,k}^s$ are independent, the expectation in the numerator separates as $\mathbb{E}[A_{i,k}^s] \cdot \mathbb{E}[e^{j\phi_{i,k}^s}]$. Therefore, the term $\mathbb{E}[A_{i,k}^s]$ cancels out:
\begin{equation}
    r_{i,k} \xrightarrow{S \to \infty} \frac{\mathbb{E}[A_{i,k}^s] \cdot |\mathbb{E}[e^{j\phi_{i,k}^s}]|}{\mathbb{E}[A_{i,k}^s]} = \rho_k,
\end{equation}
which guarantees that the empirical resultant length robustly converges to the true phase concentration.

\textbf{Step 2: Preservation of Domain-Invariant Signals ($\rho_k \to 1$).} \quad
In directional statistics, the true phase concentration determines the circular variance as $V(\phi_{i,k}) = 1 - \rho_k$. As $\rho_k \to 1$, the variance $V(\phi_{i,k}) \to 0$, implying that the phases across all domains become perfectly aligned to a single consensus direction. Consequently, the domain-specific noise strictly vanishes ($N_{i,k}^s \to 0$), and all per-domain gradients converge to the shared signal:
\begin{equation}
    G_{i,k}^s \xrightarrow{\rho_k \to 1} G_{i,k}^{\mathrm{shared}} \quad \text{for all } s \in \{1, \dots, S\}.
\end{equation}
Under this condition, the empirical average gradient perfectly isolates the shared signal:
\begin{equation}
    \bar{G}_{i,k} = \frac{1}{S}\sum_{s=1}^S G_{i,k}^s = G_{i,k}^{\mathrm{shared}}.
\end{equation}
Furthermore, because all vectors point in the identical direction, the magnitude of their sum precisely equals the sum of their individual magnitudes. Thus, the resultant length reaches its theoretical maximum:
\begin{equation}
    r_{i,k} = \frac{\left| \sum_{s=1}^S G_{i,k}^s \right|}{\sum_{s=1}^S |G_{i,k}^s|} = \frac{\left| S \cdot G_{i,k}^{\mathrm{shared}} \right|}{S \cdot \left| G_{i,k}^{\mathrm{shared}} \right|} = 1.
\end{equation}
Multiplying these two terms yields $G_{i,k}^{\mathrm{class}} = \bar{G}_{i,k} \cdot r_{i,k} = G_{i,k}^{\mathrm{shared}} \cdot 1 = G_{i,k}^{\mathrm{shared}}$, mathematically proving that the true class-discriminative signal is reconstructed without any loss.

\textbf{Step 3: Attenuation of Domain-Specific Signals ($\rho_k \to 0$).} \quad
Conversely, as $\rho_k \to 0$, the phases are uniformly distributed ($\mathbb{E}[e^{j\phi_{i,k}^s}] = 0$), making the spectral gradients act as independent zero-mean random variables. Summing $S$ such variables resembles a 2D random walk, where the expected magnitude of the sum $\sum_{s=1}^S G_{i,k}^s$ grows only by $\mathcal{O}(\sqrt{S})$. 
Consequently, the magnitude of the average gradient naturally shrinks as $\mathbb{E}[|\bar{G}_{i,k}|] = \mathcal{O}(1/\sqrt{S})$. 
Similarly, since the denominator of $r_{i,k}$ grows linearly by $\mathcal{O}(S)$, the resultant length also shrinks at the same rate: $\mathbb{E}[r_{i,k}] = \mathcal{O}(1/\sqrt{S})$.
Finally, multiplying these two terms yields $\mathbb{E}[|G_{i,k}^{\mathrm{class}}|] = \mathcal{O}(1/S)$, indicating fast suppression of domain-specific signals.
\end{proof}

\subsection{Interpretation}

\begin{remark}[Variance-Regularized Gradient Matching]
\label{rem:varreg}
In directional statistics, the true phase concentration is strictly tied to the circular variance: $\rho_k = 1 - V(\phi_{i,k})$. Proposition~\ref{prop:recovery} mathematically demonstrates that SGS inherently acts as a variance-regularized filter. Frequency components exhibiting small phase variance (i.e., class-discriminative signals with high cross-domain agreement) are perfectly preserved, whereas components with high phase variance (i.e., domain-specific noise) are strongly and quadratically penalized. This ensures that the distilled dataset is optimized strictly on invariant semantics, aligning with the core objective of out-of-distribution generalization.
\end{remark}



\section{Extension to Single Domain Generalization}
\subsection{Implementation Details}~\label{sec:ext_sdg}
In practice, $\mathcal{D}$ may not provide explicit domain splits, which corresponds to the single domain generalization (SDG) setting. 
To extend SGS to this setting, we estimate pseudo-domain labels $\{1, \dots, K\}$ to substitute for ground-truth domain labels.
Specifically, motivated by the observation in Figure~\ref{fig:umap_domain} that feature statistics encode domain-specific information, we extract channel-wise mean and standard deviation of intermediate feature maps from $\psi$ for each sample, and apply K-means clustering~\citep{kmeans} over these style statistics to partition $\mathcal{D}$ into $K$ pseudo-domains, which are then used in place of ground-truth domain splits throughout SGS.

\subsection{More Results in SDG Setting}\label{sec:more_sdg}

\noindent\textbf{Sensitivity of $K$.}\quad 
We analyze the sensitivity of the hyperparameter $K$, which determines the number of pseudo-domains, on the Digits-DG (IPC=10, target: M). 

\begin{wrapfigure}[7]{r}{0.26\linewidth} 
  \vspace{-5mm}
    \centering
    \includegraphics[width=\linewidth]{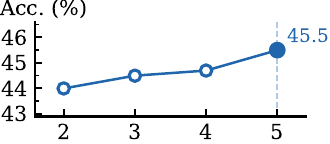}
    \vspace{-3mm}
    \caption{Performance changes according to $K$.}
    \vspace{-8mm}
    \label{fig:K}
\end{wrapfigure}
Figure~\ref{fig:K} illustrates the average OOD accuracy across different values of $K$. 
We observe that the OOD performance generally improves as $K$ increases. This empirical trend perfectly aligns with our theoretical finding in Proposition~\ref{prop:recovery}, which states stronger suppression of domain-specific signals with a larger number of source domains. 
However, a larger $K$ linearly scales up the distillation time overhead, as gradients must be computed independently for each pseudo-domain. 
Furthermore, an excessively large $K$ risks isolating outliers into overly sparse clusters, which injects noisy gradient signals and ultimately degrades the distillation performance.

\noindent\textbf{Visualization.}\quad 
Figure~\ref{fig:sdg_digit} visualizes the distilled images under the Single Domain Generalization (SDG) setting for each source domain, using an IPC of 10 and $K=3$ pseudo-domains. 
For the M source domain, the generated images reveal that each pseudo-domain captures distinct intra-domain variations, such as different writing styles, and digit angles. 
For the other source domains, the pseudo-domains exhibit clear visual distinctions, successfully grouping diverse color palettes and background textures. 
These visualizations confirm that our pseudo-domain clustering strategy faithfully uncovers meaningful latent sub-populations within a single source domain. 

\begin{figure}[h!]
    \vspace{-2mm}
    \centering
    \subfloat[M]{\includegraphics[width=0.24\linewidth]{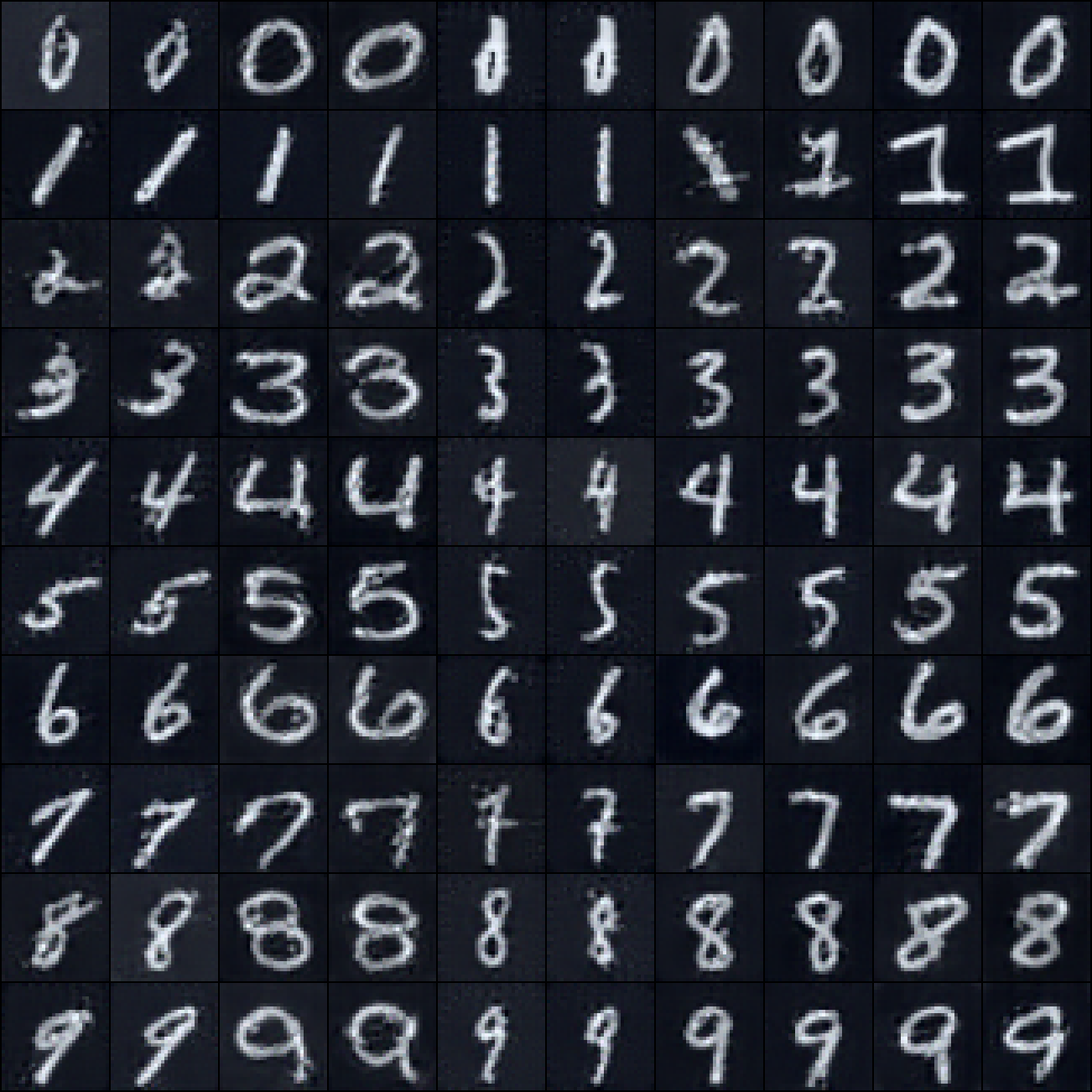}}\hfill
    \subfloat[N]{\includegraphics[width=0.24\linewidth]{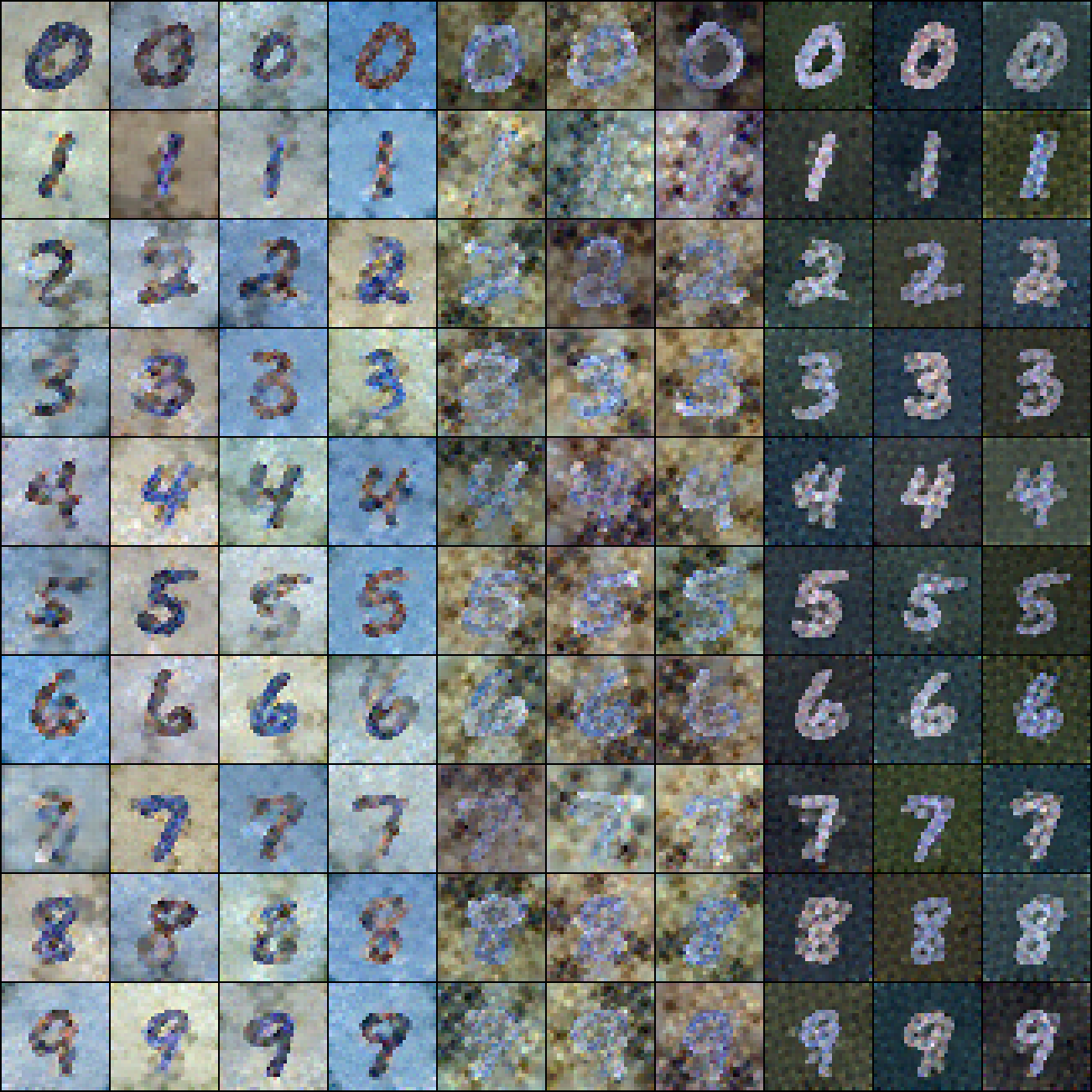}}\hfill
    \subfloat[V]{\includegraphics[width=0.24\linewidth]{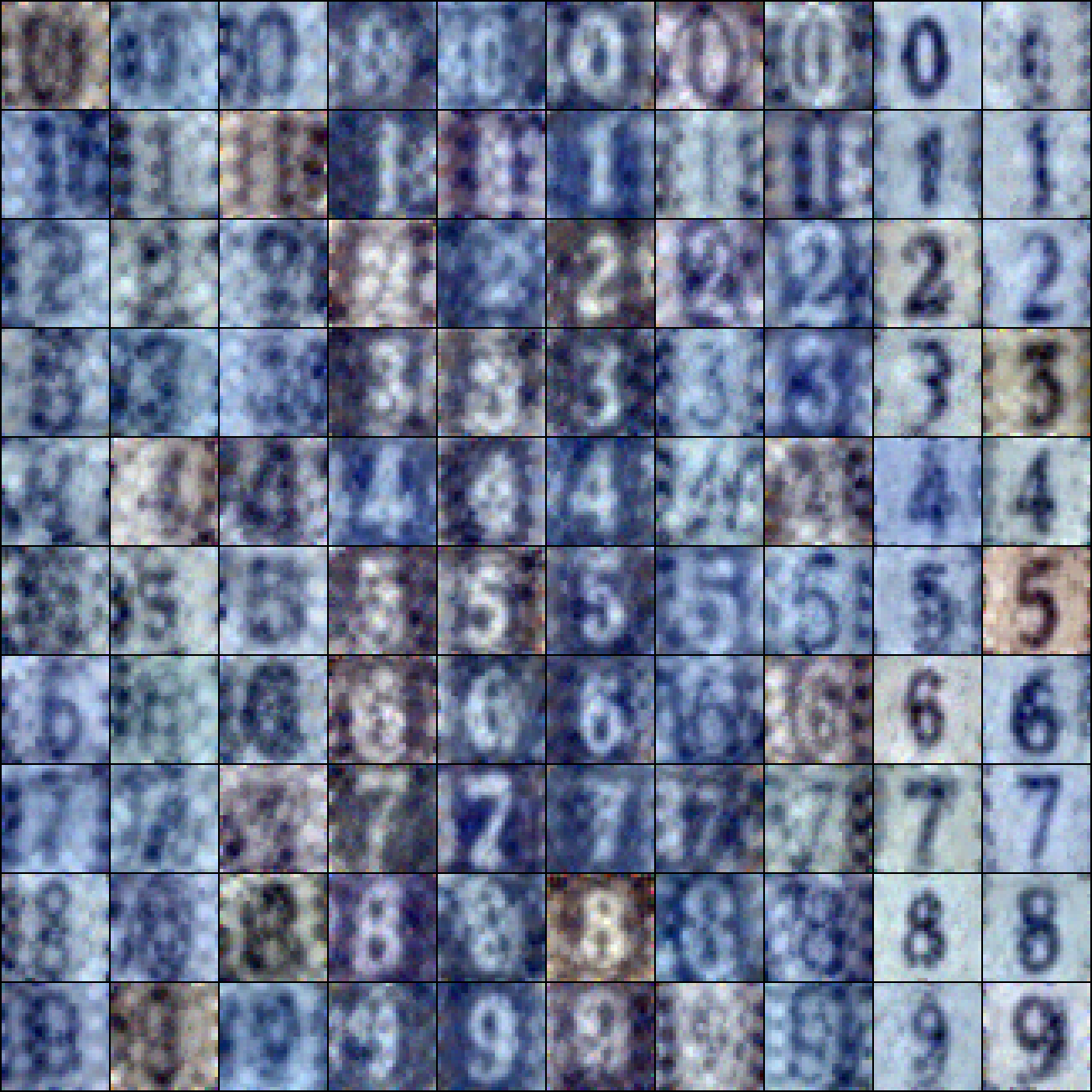}}\hfill
    \subfloat[Y]{\includegraphics[width=0.24\linewidth]{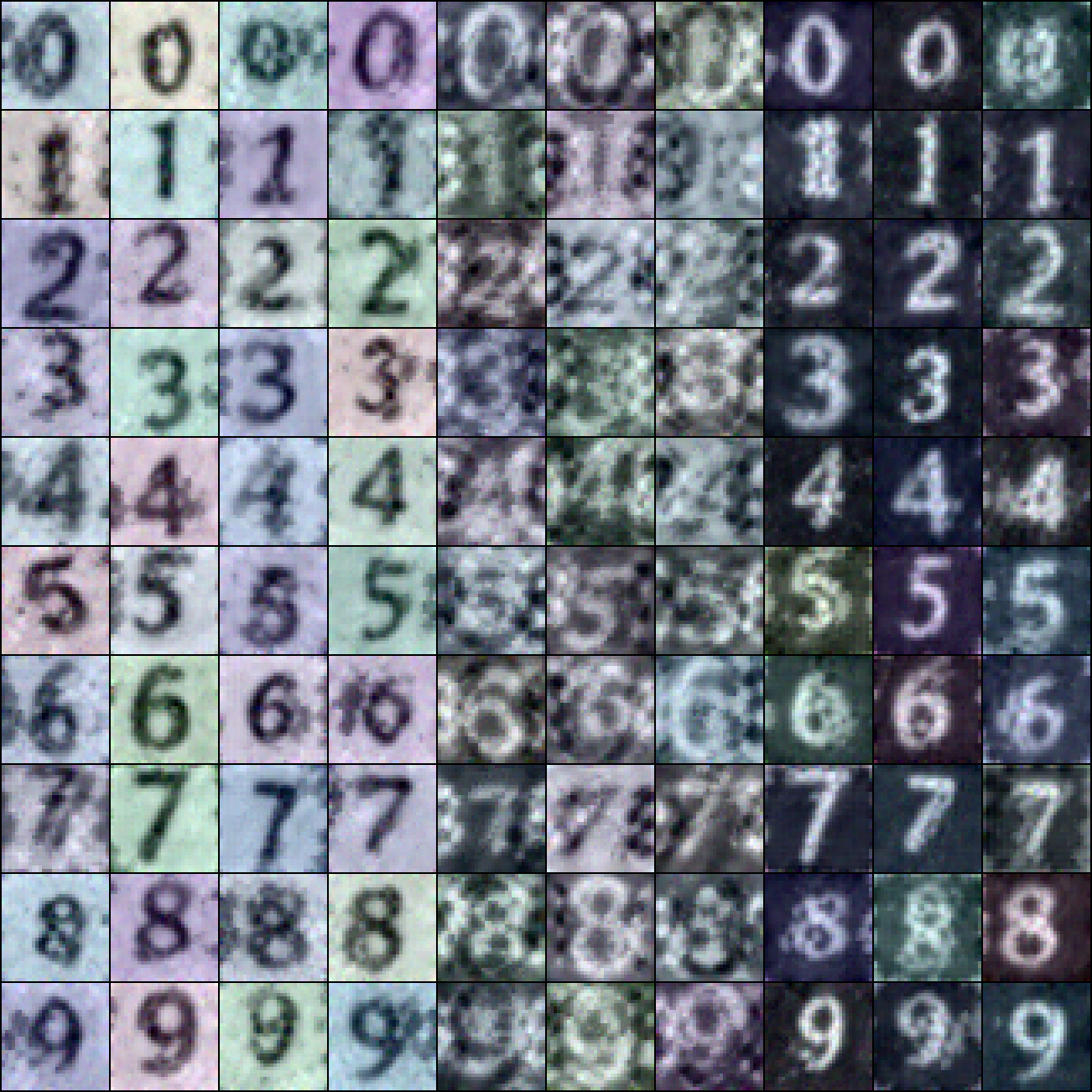}}
    \caption{Visualization of distilled images on the Digits-DG under the SDG setting (IPC=10).}    
    \label{fig:sdg_digit}
\end{figure} 

\noindent\textbf{Quantitative Results on PACS.}\quad 
Table~\ref{tab:supp_sdg} presents the SDG results on the PACS dataset (IPC=10, 20). Consistent with our main findings, SGS consistently outperforms baseline methods, demonstrating its strong generalization capability even when trained on a single source domain.

\begin{table}[h!]
    \vspace{-2mm}
    \caption{Single-domain generalization performance on PACS with IPC=10 and IPC=20.}
    \centering
    \resizebox{0.97\linewidth}{!}{
    \begin{tabular}{c|ccccc|ccccc}
    \toprule
    IPC & \multicolumn{5}{c|}{10} & \multicolumn{5}{c}{20} \\
    Source & P & A & C & S & Avg. & P & A & C & S & Avg. \\ \midrule
    \multicolumn{1}{l|}{DM} 
    & 23.6\pmval0.8 & 36.7\pmval0.6 & 32.5\pmval0.8 & 21.7\pmval1.5 & 28.6\pmval0.9
    & 27.0\pmval0.9 & 37.9\pmval1.4 & 34.5\pmval0.6 & 19.3\pmval0.8 & 29.7\pmval0.9\\

    \rowcolor[gray]{0.9}
    \multicolumn{1}{l|}{\quad+SGS} 
    & \textcolor{red}{\textbf{29.0}}\pmval0.7 & \textcolor{red}{\textbf{39.6}}\pmval1.4 & \textcolor{red}{33.7}\pmval1.1 & \textcolor{red}{\textbf{22.5}}\pmval1.0 & \textcolor{red}{\textbf{31.2}}\pmval1.1
    & \textcolor{red}{\textbf{28.2}}\pmval1.0 & \textcolor{red}{\textbf{38.3}}\pmval1.0 & \textcolor{blue}{34.2}\pmval0.9 & \textcolor{red}{20.4}\pmval0.8 & \textcolor{red}{30.3}\pmval0.9\\
    
    \multicolumn{1}{l|}{HDD-DM} 
    & 25.1\pmval0.4 & 35.7\pmval0.9 & 31.5\pmval0.9 & 21.2\pmval0.2 & 28.4\pmval0.6
    & 24.3\pmval0.9 & 36.4\pmval1.4 & 34.7\pmval0.8 & 19.5\pmval1.4 & 28.7\pmval1.1 \\
    
    \rowcolor[gray]{0.9}
    \multicolumn{1}{l|}{\quad+SGS} 
    & \textcolor{red}{27.2}\pmval0.8 & \textcolor{red}{37.4}\pmval1.5 & \textcolor{red}{\textbf{34.6}}\pmval0.4 & \textcolor{red}{\textbf{22.5}}\pmval1.3 & \textcolor{red}{30.4}\pmval1.0
    & \textcolor{red}{26.9}\pmval1.3 & \textcolor{red}{37.9}\pmval1.6 & \textcolor{red}{\textbf{36.5}}\pmval0.6 & \textcolor{red}{\textbf{23.4}}\pmval1.0 & \textcolor{red}{\textbf{31.2}}\pmval1.1\\
    \bottomrule
    \end{tabular}
    }
    \label{tab:supp_sdg}
\end{table} 

\section{Initialization Strategy}
We study performance variations under different initialization strategies: \textit{Noise}, \textit{Random}, and \textit{Uniform}.
\textit{Noise} initializes synthetic images with random noise.
\textit{Random} assigns synthetic images by randomly sampling from the entire source dataset.
\textit{Uniform} distributes synthetic images evenly across each domain dataset.
Tables~\ref{tab:init} presents the results on Digits-DG and PACS, respectively, for IPC = 10 and IPC = 20.
We observe that \textit{Uniform} consistently outperforms the other strategies on average across all settings.
We attribute this improvement to increased diversity: uniform allocation helps prevent overfitting to the characteristics of any single domain.
\begin{table}[h!]
    \vspace{-4mm}
    \centering
    \caption{Performance changes on Digits-DG according to different initialization strategies.}
    \resizebox{0.84\linewidth}{!}{
    \setlength{\tabcolsep}{3pt}
    \begin{tabular}{c|ccccc|ccccc}
    \toprule
    Dataset & \multicolumn{10}{c}{Digits-DG} \\
    \multicolumn{1}{c|}{IPC} & \multicolumn{5}{c|}{10} & \multicolumn{5}{c}{20} \\ \midrule
    Target & M & N & V & Y & Avg. & M & N & V & Y & Avg. \\
    \midrule
     Noise &
     89.9\pmval0.5 & 50.6\pmval1.6 & 52.5\pmval1.1 & \textbf{71.4}\pmval0.5 & 66.1\pmval0.9 &
    89.8\pmval0.6 & 53.2\pmval1.6 & \textbf{58.2}\pmval1.0 & 73.6\pmval0.8 & 68.7\pmval1.0 \\
    
    Random & 
     89.4\pmval0.8 & 51.4\pmval1.9 & 53.9\pmval1.1 & 70.1\pmval1.2 & 66.2\pmval1.3 &
    89.5\pmval0.5 & 52.8\pmval0.7 & 57.4\pmval1.3 & \textbf{75.1}\pmval0.2 & 68.7\pmval0.7 \\
   
    Uniform & 
    \textbf{90.3}\pmval0.4 & \textbf{54.1}\pmval0.9 & \textbf{55.3}\pmval0.4 & 71.0\pmval0.8 & \textbf{67.7}\pmval0.6 &
    \textbf{90.7}\pmval0.5 & \textbf{54.3}\pmval1.2 & 58.0\pmval0.5 & 74.3\pmval0.8 & \textbf{69.3}\pmval0.8 \\
  
    \bottomrule
    \multicolumn{11}{c}{} \\
    \toprule
    
    Dataset & \multicolumn{10}{c}{PACS} \\ 
    \multicolumn{1}{c|}{IPC} & \multicolumn{5}{c|}{10} & \multicolumn{5}{c}{20} \\ \midrule
    Target & P & A & C & S & Avg. & P & A & C & S & Avg. \\
    \midrule
     Noise &
     52.8\pmval0.7 & 27.6\pmval0.6 & 33.4\pmval1.7 & \textbf{37.8}\pmval4.6 & 37.9\pmval1.9 &
    52.6\pmval1.3 & \textbf{31.1}\pmval0.7 & 35.1\pmval1.3 & 37.7\pmval2.5 & 39.1\pmval1.5 \\
    
    Random & 
     52.6\pmval3.0 & \textbf{28.4}\pmval1.1 & 31.6\pmval1.2 & 34.1\pmval3.7 & 36.7\pmval2.3 &
    52.9\pmval0.7 & 28.6\pmval0.8 & 34.1\pmval0.8 & \textbf{40.1}\pmval1.4 & 38.9\pmval0.9 \\
    
    Uniform & 
     \textbf{53.8}\pmval0.4 & \textbf{28.4}\pmval1.1 & \textbf{35.1}\pmval1.5 & 37.2\pmval1.8 & \textbf{38.6}\pmval1.2 &
    \textbf{53.2}\pmval1.0 & 29.1\pmval0.6 & \textbf{37.7}\pmval2.3 & 39.8\pmval2.5 & \textbf{40.0}\pmval1.6 \\
  
    \bottomrule
    \end{tabular}}
    \label{tab:init}
\end{table} 

\section{More Results of Ablation Study}
We provide an extended ablation study on the Digits-DG and PACS to further validate the contribution of each gradient component (Table~\ref{tab:ablation_sup}). 

The results consistently show that our full formulation ($g^\text{class} + g^\text{domain} + g$) achieves the highest OOD accuracy across all IPC settings on average.
Notably, the significant performance collapse when using only the domain-specific component ($g^\text{domain}$) confirms that it successfully isolates non-essential style variations from the class-discriminative signals. The synergistic effect of combining all components ensures that SGS effectively filters out domain-specific signals while preserving domain-invariant ones, leading to robust generalization regardless of the memory budget.
\begin{table}[h!]
    \centering
    \caption{Ablation study on Digits-DG and PACS when IPC=10 and IPC=20.}
    \resizebox{0.96\linewidth}{!}{
    \setlength{\tabcolsep}{2pt}
    \begin{tabular}{ccc|ccccc|ccccc}
    \toprule
    \multicolumn{3}{c|}{Dataset} & \multicolumn{10}{c}{Digits-DG} \\ 
    \multicolumn{3}{c|}{IPC} & \multicolumn{5}{c|}{10} & \multicolumn{5}{c}{20} \\ \midrule
    $\boldsymbol{g}^\text{class}$ & $\boldsymbol{g}^\text{domain}$ & $\boldsymbol{g}$ & M & N & V & Y & Avg. & M & N & V & Y & Avg. \\
    \midrule
     &  & \checkmark &
    85.3\pmval0.7 & 51.4\pmval1.5 & 53.6\pmval0.9 & 69.9\pmval0.9 & 65.1\pmval1.0 &
    88.4\pmval0.8 & 53.1\pmval1.0 & 56.4\pmval1.1 & 73.7\pmval0.2 & 67.9\pmval0.8\\
    
    \checkmark & & & 
    85.5\pmval0.9 &52.6\pmval1.2 &52.4\pmval1.3 &68.9\pmval1.5 &64.9\pmval1.2 &
    88.6\pmval0.9 & 52.6\pmval1.3 & 56.5\pmval1.2 & 74.2\pmval1.5 & 68.0\pmval1.2 \\
   
    & \checkmark & & 
    65.8\pmval2.5 &42.5\pmval1.0 &27.2\pmval0.7 &43.1\pmval1.2 &44.7\pmval1.4 &
    68.9\pmval5.3 & 43.6\pmval1.0 & 37.7\pmval0.8 & 52.8\pmval1.2 & 50.8\pmval2.1 \\
    
    \checkmark & \checkmark & & 
    88.6\pmval0.9 &51.8\pmval1.0 &52.7\pmval0.8 &67.1\pmval0.5 &65.1\pmval0.8 &
    86.9\pmval0.9 & 51.2\pmval1.7 & 57.0\pmval0.8 & 72.1\pmval0.5 & 66.8\pmval1.0 \\
    
    \rowcolor[gray]{0.9}        
    \checkmark & \checkmark & \checkmark & 
    \textbf{90.3}\pmval0.4 &\textbf{54.1}\pmval0.9 &\textbf{55.3}\pmval0.4 &\textbf{71.0}\pmval0.8 &\textbf{67.7}\pmval0.6 &
    \textbf{90.7}\pmval0.5 & \textbf{54.3}\pmval1.2 & \textbf{58.0}\pmval0.5 & \textbf{74.3}\pmval0.8 & \textbf{69.3}\pmval0.8 \\
    \bottomrule
    
    \multicolumn{13}{c}{} \\ 
    
    \toprule
    \multicolumn{3}{c|}{Dataset} & \multicolumn{10}{c}{PACS} \\ 
    \multicolumn{3}{c|}{IPC} & \multicolumn{5}{c|}{10} & \multicolumn{5}{c}{20} \\ \midrule
    $\boldsymbol{g}^\text{class}$ & $\boldsymbol{g}^\text{domain}$ & $\boldsymbol{g}$ & P & A & C & S & Avg. & P & A & C & S & Avg. \\
    \midrule
     &  & \checkmark &
    48.0\pmval0.7 & 27.9\pmval0.5 & 33.8\pmval1.5 & 32.6\pmval2.2 & 35.6\pmval1.2 &
    49.2\pmval0.4 & 28.8\pmval1.0 & 35.4\pmval2.2 & 38.1\pmval2.7 & 37.9\pmval1.6 \\   
    
    \checkmark & & & 
    49.4\pmval2.6 & 27.7\pmval0.6 & \textbf{35.9}\pmval0.8 & 33.0\pmval2.2 & 36.5\pmval1.6 &
    52.6\pmval0.7 & \textbf{29.1}\pmval0.1 & 37.6\pmval1.4 & 37.8\pmval1.8 & 39.3\pmval1.0 \\
   
    & \checkmark & & 
    30.6\pmval3.1 & 24.6\pmval0.7 & 29.5\pmval0.8 & 24.5\pmval3.9 & 27.3\pmval2.1 &
    39.2\pmval2.7 & 23.4\pmval0.9 & 30.2\pmval1.1 & 30.7\pmval3.4 & 30.9\pmval2.0 \\
    
    \checkmark & \checkmark & & 
    \textbf{54.7}\pmval1.0 & 25.6\pmval0.4 & 35.1\pmval0.9 & 33.9\pmval3.5 & 37.3\pmval1.5 &
    51.7\pmval1.2 & 29.0\pmval0.6 & 31.9\pmval1.5 & 38.5\pmval0.7 & 37.8\pmval1.0 \\
    
    \rowcolor[gray]{0.9}        
    \checkmark & \checkmark & \checkmark & 
    53.8\pmval0.4 & \textbf{28.4}\pmval1.1 & 35.1\pmval1.5 & \textbf{37.2}\pmval1.8 & \textbf{38.6}\pmval1.2 & 
    \textbf{53.2}\pmval1.0 & \textbf{29.1}\pmval0.6 & \textbf{37.7}\pmval2.3 & \textbf{39.8}\pmval2.5 & \textbf{40.0}\pmval1.6 \\
    \bottomrule
    \end{tabular}}
    \label{tab:ablation_sup}
\end{table}

\section{More Results of Post-hoc Domain Generalization}
Figure~\ref{fig:more_dg} compares the average OOD accuracy and downstream training time of various post-hoc DG methods applied to DM-distilled datasets when IPC=10. 
Existing post-hoc methods reveal a strict trade-off between accuracy and efficiency. \textit{MixStyle} is computationally light but consistently underperforms across all benchmarks. Conversely, augmentation-based methods like \textit{FACT} and \textit{AdvFreq} occasionally achieve competitive accuracy, but they suffer from high performance variance across datasets. More critically, their complex augmentation pipelines introduce severe computational overhead, slowing down the downstream training by nearly 2$\times$ (\textit{FACT}) and 7$\times$ (\textit{AdvFreq}). This directly undermines the core advantage of dataset distillation: accelerating downstream training.
In contrast, our SGS inherently encodes domain-invariant semantics directly into the synthetic pixels during the distillation phase, eliminating the need for expensive post-hoc augmentations. As a result, SGS achieves the highest and most stable generalization performance while perfectly maintaining the extremely efficient training speed of the pure DM baseline.

\begin{figure}[h!]
    \centering
    \subfloat[Digits-DG]{\includegraphics[width=0.33\linewidth]{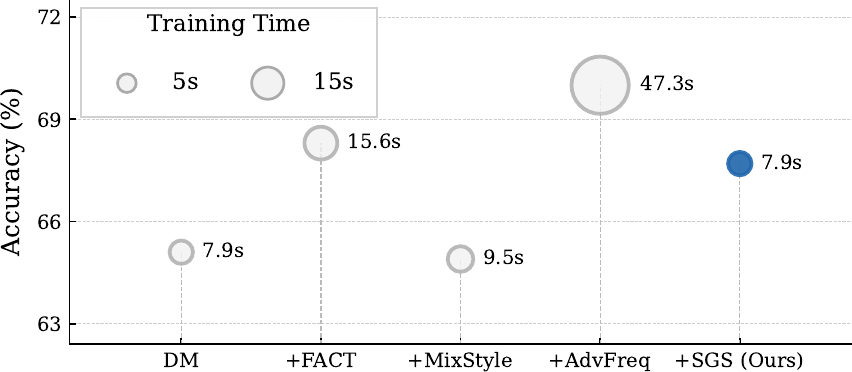}}~
    \subfloat[PACS]{\includegraphics[width=0.33\linewidth]{fig/bubble_pacs.pdf}}~
    \subfloat[CORe50-Hard]{\includegraphics[width=0.33\linewidth]{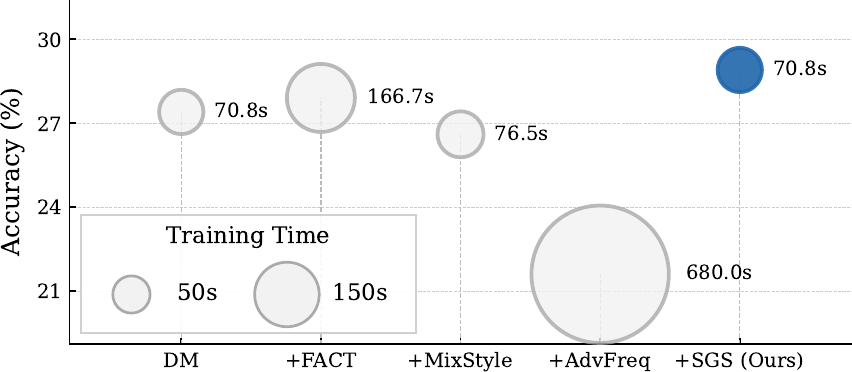}}
    \caption{Comparison of average OOD accuracy and downstream training time among various post-hoc DG methods applied to DM-distilled datasets~(IPC=10).}    \label{fig:more_dg}
\end{figure}

\newpage
\section{Visualization of Distilled Images}\vspace{-2mm}
Figure~\ref{fig:vis_distilled_mdg} and Figure~\ref{fig:vis_distilled_sdg} visualizes distilled images via SGS upon DM on three benchmark datasets when IPC=10 under MDG and SDG settings, respectively.
\begin{figure}[h!]
    \vspace{-2mm}
    \centering
    \subfloat[Target: M]{\includegraphics[width=0.25\linewidth]{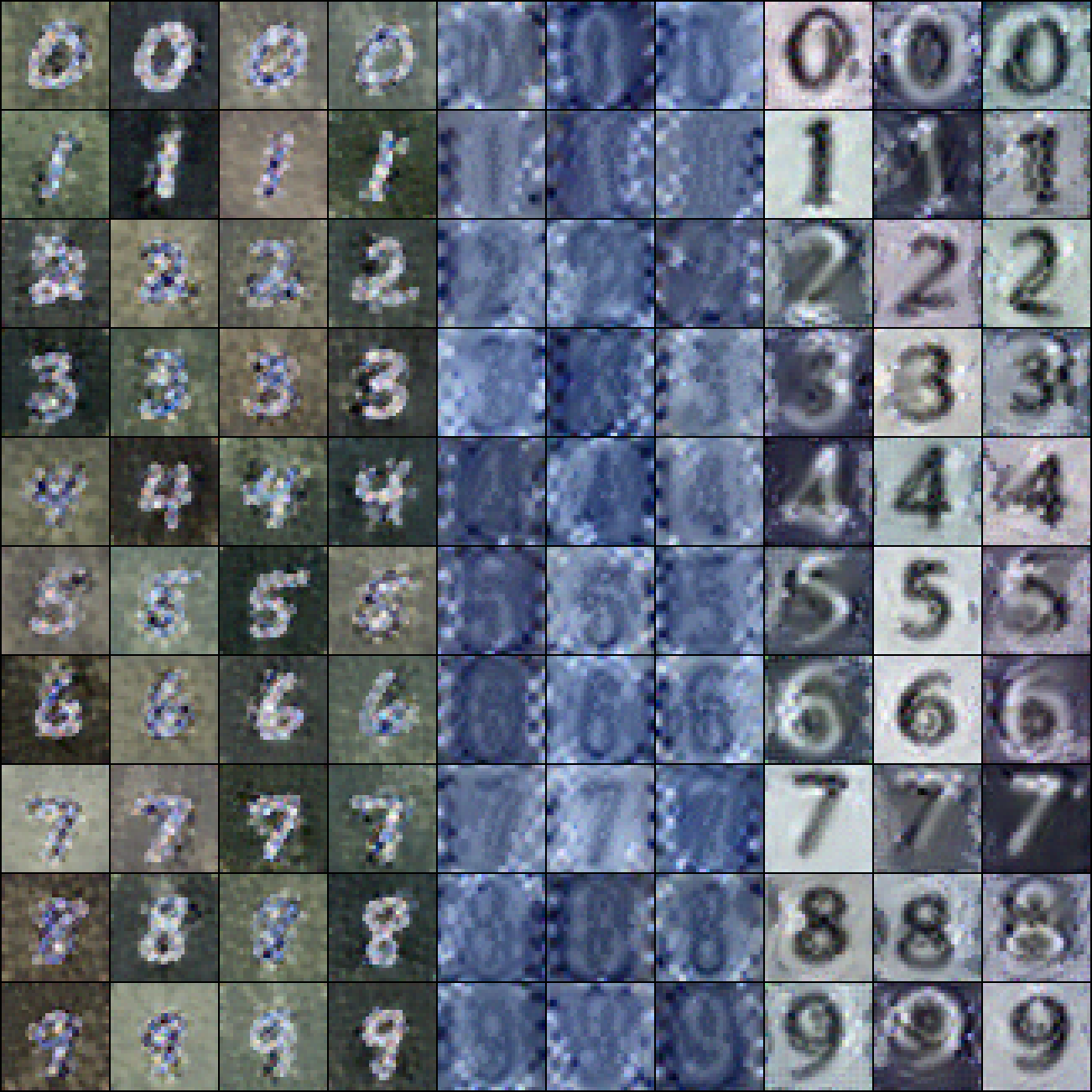}}~
    \subfloat[Target: N]{\includegraphics[width=0.25\linewidth]{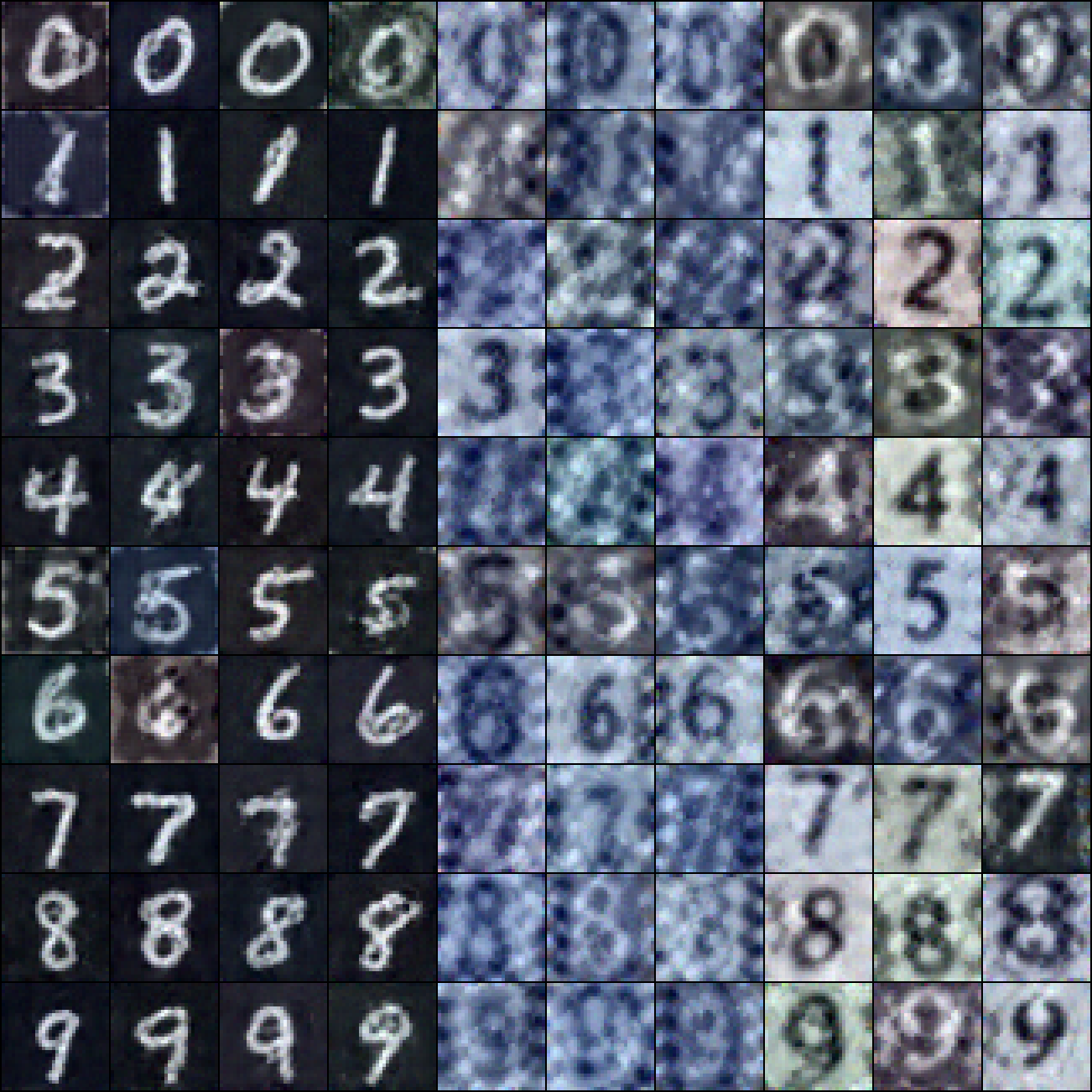}}~
    \subfloat[Target: V]{\includegraphics[width=0.25\linewidth]{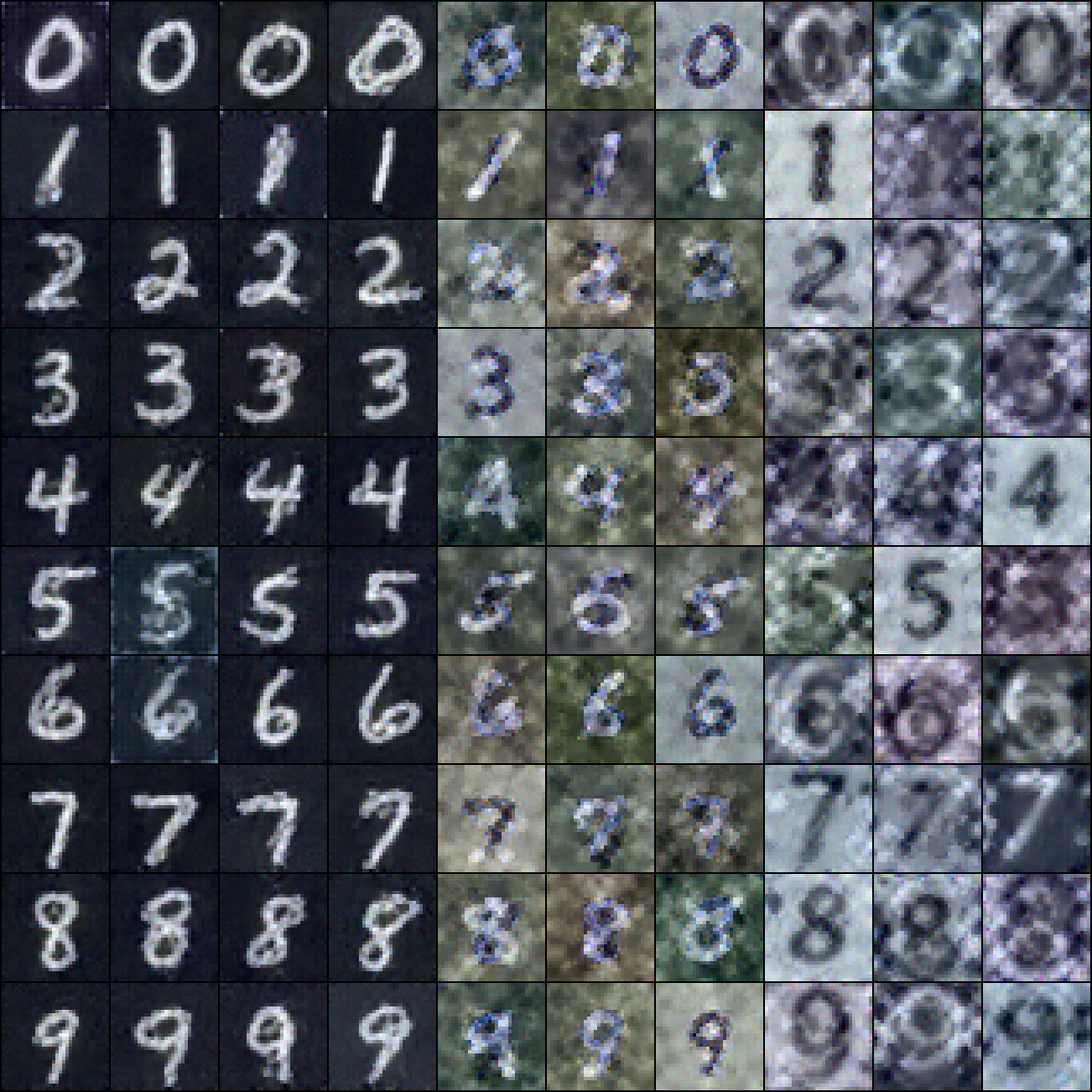}}~
    \subfloat[Target: Y]{\includegraphics[width=0.25\linewidth]{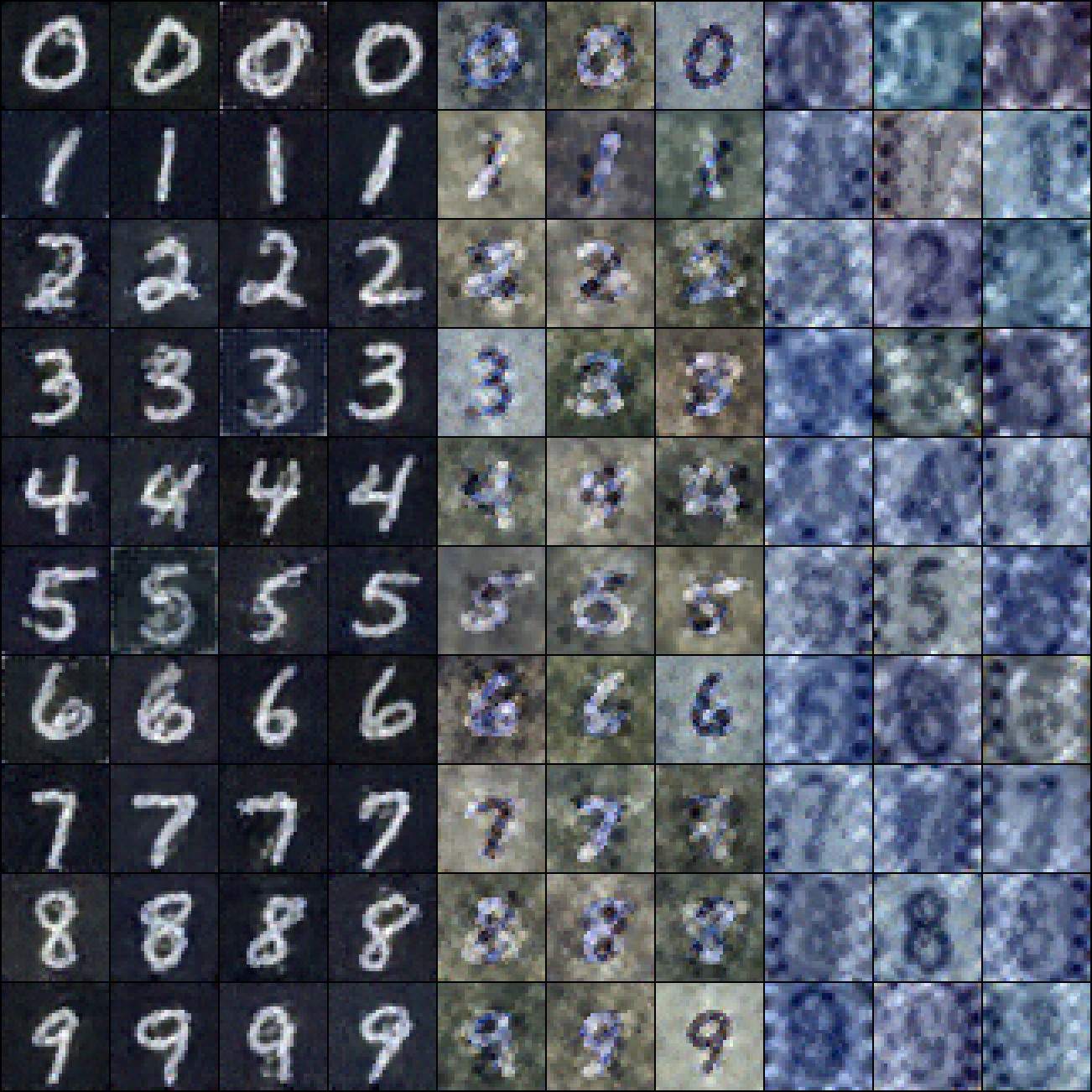}}
    \\
    \subfloat[Target: P]{\includegraphics[width=0.25\linewidth]{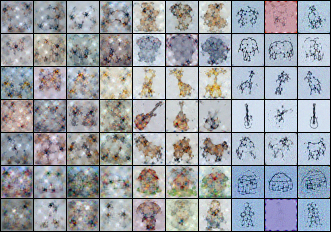}}~
    \subfloat[Target:A]{\includegraphics[width=0.25\linewidth]{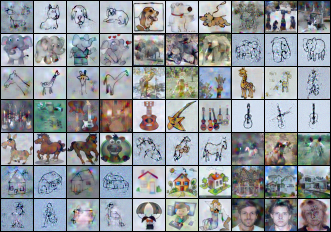}}~
    \subfloat[Target:C]{\includegraphics[width=0.25\linewidth]{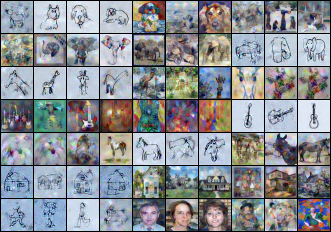}}~
    \subfloat[Target:S]{\includegraphics[width=0.25\linewidth]{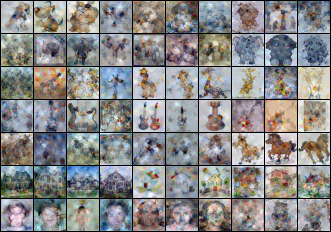}}
    \\
    \subfloat[Target: s5]{\includegraphics[width=0.25\linewidth]{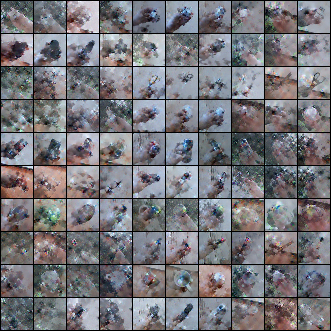}}~
    \subfloat[Target: s7]{\includegraphics[width=0.25\linewidth]{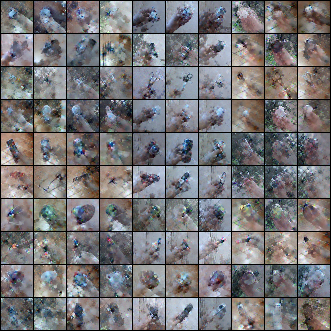}}~
    \subfloat[Target: s9]{\includegraphics[width=0.25\linewidth]{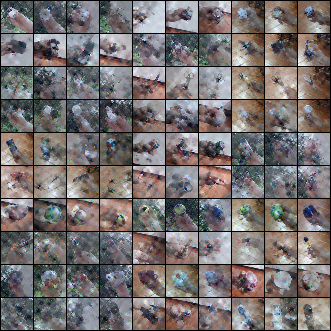}}~
    \subfloat[Target: s10]{\includegraphics[width=0.25\linewidth]{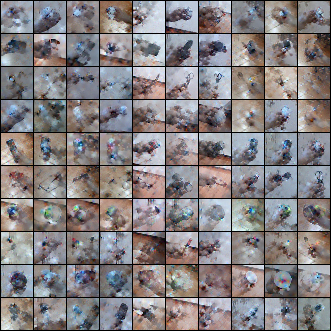}}

    \caption{Distilled Images of our proposed SGS upon DM when IPC=10 under MDG setting. Each row indicates Digits-DG, PACS, and CORe50-Hard, respectively, on each target domain.}
    \label{fig:vis_distilled_mdg}
\end{figure}

\begin{figure}[h!]
    \centering
    \subfloat[Source: M]{\includegraphics[width=0.25\linewidth]{fig/SGS_syn/SDG/Digits-DG/M.png}}~
    \subfloat[Source: N]{\includegraphics[width=0.25\linewidth]{fig/SGS_syn/SDG/Digits-DG/N.png}}~
    \subfloat[Source: V]{\includegraphics[width=0.25\linewidth]{fig/SGS_syn/SDG/Digits-DG/V.png}}~
    \subfloat[Source: Y]{\includegraphics[width=0.25\linewidth]{fig/SGS_syn/SDG/Digits-DG/Y.png}}
    \\
    \subfloat[Source: P]{\includegraphics[width=0.25\linewidth]{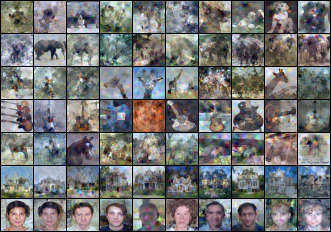}}~
    \subfloat[Source: A]{\includegraphics[width=0.25\linewidth]{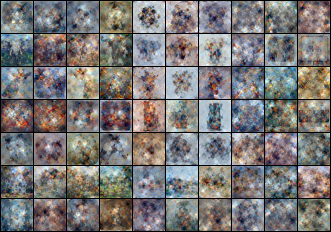}}~
    \subfloat[Source: C]{\includegraphics[width=0.25\linewidth]{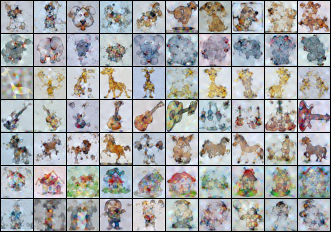}}~
    \subfloat[Source: S]{\includegraphics[width=0.25\linewidth]{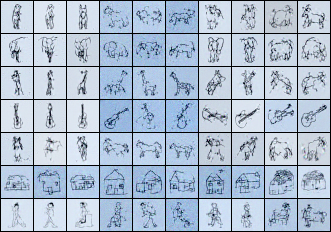}}
    \caption{Distilled Images of our proposed SGS upon DM when IPC=10 under SDG setting. Each row indicates Digits-DG, and PACS, respectively, on each source domain.}
    \label{fig:vis_distilled_sdg}
\end{figure}


\newpage
\end{document}